\title{Max-Sliced Mutual Information}
\author{%
Dor Tsur\thanks{\texttt{dortz@post.bgu.ac.il}}\\
Ben-Gurion University
\and
Ziv Goldfeld\thanks{\texttt{goldfeld@cornell.edu}}\\
Cornell University
\and 
Kristjan Greenewald\thanks{\texttt{kristjan.h.greenewald@ibm.com}} \\
MIT-IBM Watson AI Lab
}
\date{}
\begin{document}

\maketitle

\begin{abstract}
Quantifying the dependence between high-dimensional random variables is central to statistical learning and inference. Two classical methods are canonical correlation analysis (CCA), which identifies maximally correlated projected versions of the original variables, and Shannon's mutual information, which is a universal dependence measure that also captures high-order dependencies. However, CCA only accounts for linear dependence, which may be insufficient for certain applications, while mutual information is often infeasible to compute/estimate in high dimensions. This work proposes a middle ground in the form of a scalable information-theoretic generalization of CCA, termed max-sliced mutual information (mSMI). mSMI equals the maximal mutual information between low-dimensional projections of the high-dimensional variables, which reduces back to CCA in the Gaussian case. It enjoys the best of both worlds: capturing intricate dependencies in the data while being amenable to fast computation and scalable estimation from samples. We show that mSMI retains favorable structural properties of Shannon's mutual information, like variational forms and identification of independence. We then study statistical estimation of mSMI, propose an efficiently computable neural estimator, and couple it with formal non-asymptotic error bounds. We present experiments that demonstrate the utility of mSMI for several tasks, encompassing independence testing, multi-view representation learning, algorithmic fairness, and generative modeling. We observe that mSMI consistently outperforms competing methods with little-to-no computational overhead.  
\end{abstract}

\section{Introduction}
Dependence measures between random variables are fundamental in statistics and machine learning for tasks spanning independence testing \cite{siegel1957nonparametric,haugh1976checking,berrett2019nonparametric}, clustering \cite{butte1999mutual,kraskov2005hierarchical}, representation learning \cite{oord2018representation,wang2015deep}, and self-supervised learning \cite{chen2020simple,zbontar2021barlow,balestriero2023cookbook}. 
There are a wide array of measures quantifying different notions of dependence, with varying statistical and computational complexities. The simplest is the Pearson correlation coefficient \cite{pearson1895vii}, which only captures linear dependencies. At the other extreme is Shannon's mutual information \cite{CovThom06,el2011network}, which is a universal dependence measure that is able to identify arbitrarily intricate dependence structures. Despite its universality and favorable properties, accurately estimating mutual information from data is infeasible in high-dimensional settings. First, mutual information estimation rates suffers from the curse of dimensionality, whereby convergence rates deteriorate exponentially with dimension~\cite{Paninski2003}. Additionally, computing mutual information  requires integrating log-likelihood ratios over a high-dimensional ambient space, which is generally intractable.

Between these two extremes is the popular canonical correlation analysis (CCA)~\cite{hotelling1933analysis}, which identifies maximally correlated linear projections of variables. However, classical CCA still only captures linear dependence, which has inspired nonlinear extensions such as Hirschfeld–Gebelein–R\'enyi (HGR) maximum correlation \cite{hirschfeld1935connection,gebelein1941statistische,renyi1959measures}, kernel CCA~\cite{akaho2006kernel, hardoon2004canonical}, deep CCA~\cite{andrew2013deep, wang2015deep}, and various other  generalizations~\cite{bach2005probabilistic,kim2006learning,parkhomenko2009sparse,painsky2020nonlinear}. However, HGR is computationally infeasible, while kernel and deep CCA can be burdensome in high dimensions as they require optimization over reproducing kernel Hilbert spaces or deep neural networks (NNs), respectively. To overcome these shortcomings, this work proposes max-sliced mutual information (mSMI)---a scalable information-theoretic extension of CCA that captures the full dependence structure while only requiring optimization over linear projections.

\subsection{Contributions}

The mSMI is defined as the maximal mutual information between linear projections of the variables. Namely, the $k$-dimensional mSMI between $X$ and $Y$ with values in $\RR^{d_x}$ and $\RR^{d_y}$, respectively, is\footnote{The parameter $k$ is fixed and small compared to the ambient dimensions $d_x,d_y$, often simply set as $k=1$.}
\begin{equation}    \msmi_k(X;Y):=\sup_{(\rA,\rB)\in\sti(k,d_x)\times\sti(k,d_y)}\sI(\rAt X;\rBt Y),\label{eq:msmi_intro}
\end{equation}
where $\sti(k,d)$ is the Stiefel manifold of $d \times k$ matrices with orthonormal columns. Unlike the nonlinear CCA variants that use nonlinear feature extractors in the high-dimensional ambient space, mSMI retains the linear projections of CCA and captures nonlinear structures in the \emph{low-dimensional} feature space. This is done by using the mutual information between the projected variables, rather than the correlation, as the optimization  objective. Beyond being considerably simpler from a computational standpoint, this crucial difference allows mSMI to identify the full dependence structure, akin to classical mutual information. 
mSMI can also be viewed as the maximized version of the average-sliced mutual information (aSMI) \cite{goldfeld2021sliced,goldfeld2022k}, which averages $\sI(\rAt X;\rBt Y)$ with respect to (w.r.t.) the Haar measure over $\sti(k,d_x)\times\sti(k,d_y)$.
However, we demonstrate that compared to aSMI, mSMI benefits from improved neural estimation error bounds and a clearer interpretation.

We show that mSMI inherits important properties of mutual information, including identification of independence, tensorization, and variational forms. For jointly Gaussian $(X,Y)$, the optimal mSMI projections coincide with those of $k$-dimensional CCA \cite{mardia1979multivariate}, posing mSMI as a natural information-theoretic generalization. Beyond the Gaussian case, the solutions differ and mSMI may yield more effective representations for downstream tasks due to the intricate dependencies captured by mutual information. We demonstrate this superiority empirically for multi-view representation learning. 


For efficient computation, we propose an mSMI neural estimator based on the Donsker-Varadhan (DV) variational form \cite{donsker1983asymptotic}. Neural estimators have seen a surge in interest due to their scalability and compatibility with gradient-based optimization \cite{belghazi2018mutual,poole2018variational,chan2019neural,song2019understanding,zhang2019itene,mukherjee2020ccmi,tsur2023neural,guo2022tight}. Our estimator employs a single model that composes the projections with the NN proxy of the DV critic and jointly optimizes them. 
This results in both the estimated mSMI value and the optimal projection matrices. Building on recent analysis of neural estimation of $f$-divergences \cite{sreekumar2021non,sreekumar2022neural}, we establish non-asymptotic error bounds that scale as $O\big(k^{1/2}(\ell^{-1/2}+ kn^{-1/2})\big)$, where $\ell$ and $n$ are the numbers of neurons and $(X,Y)$ samples, respectively. Equating $\ell$ and $n$ results in the (minimax optimal) parametric estimation rate, which highlights the scalability of mSMI and its compatibility to modern learning settings. 


In our empirical investigation, we first demonstrate that our mSMI neural estimator converges orders of magnitude faster than that of aSMI \cite{goldfeld2022k}. This is because the latter requires (parallel) training of many neural estimators corresponding to different projection directions, while the mSMI estimator optimizes a single combined model. Notwithstanding the reduction in computational overhead, we show that mSMI outperforms average-slicing for independence testing. Next, we compare mSMI with deep CCA \cite{andrew2013deep,wang2015deep} by examining downstream classification accuracy based on representations obtained from both methods in a multi-view learning setting. Remarkably, we observe that even the linear mSMI projections outperform nonlinear representations obtained from deep CCA. We also consider an application to algorithmic fairness under the infomin framework~\cite{chen2023scalable}. Replacing their generalized Pearson correlation objective with mSMI, we again observe superior performance in the form of more fair representations whose utility remains on par with the fairness-agnostic model. Lastly, we devise a max-sliced version of the InfoGAN by replacing the classic mutual information regularizer with its max-sliced analog. We show that despite the low-dimensional projections, the max-sliced InfoGAN successfully learns to disentangle the latent space and generates quality samples. 

\section{Background and Preliminaries}\label{sec:background}
\textbf{Notation.} For $a,b\in\RR$, we use the notation $a \wedge b = \min \{ a,b \}$ and $a \vee b = \max \{a,b \}$. For $d\geq 1$, $\| \cdot \|$ is the Euclidean norm in $\RR^d$. 
The Stiefel manifold of $d \times k$ matrices with orthonormal columns is denoted by $\sti(k,d)$. For a $d \times k$ matrix $\rA$, we use $\proj^\rA:\RR^d\to\RR^k$ for the orthogonal projection onto the row space of $\rA$. For $\rA\in\RR^{d\times k}$ with $\mathrm{rank}(\rA)=r\leq k\wedge d$, we write $\sigma_1(\rA),\ldots,\sigma_r(\rA)$ for its non-zero singular values, and assume without loss of generality (w.l.o.g.) that they are arranged in descending order. Similarly, the eigenvalues of a square matrix $\Sigma\in\RR^{d\times d}$ are denoted by $\lambda_1(\Sigma),\ldots,\lambda_d(\Sigma)$. Let $\calP(\R^d)$ denote the space of Borel probability measures on $\RR^d$. For $\mu,\nu\in\cP(\RR^d)$, we use $\mu\otimes\nu$ to denote a product measure, while $\supp(\mu)$ designates the support of $\mu$. All random variables  throughout are assumed to be continuous w.r.t. the Lebesgue measure. For a  measurable map $f$, the pushforward of $\mu$ under $f$ is denoted by $f_{\sharp}\mu = \mu \circ f^{-1}$, i.e., if $X \sim \mu$ then $f(X) \sim f_{\sharp}\mu$. For a jointly distributed pair $(X,Y)\sim \mu_{XY}\in\cP(\RR^{d_x}\times\RR^{d_y})$, we write $\Sigma_X$ and $\Sigma_{XY}$ for covariance matrix of $X$ and cross-covariance matrix of $(X,Y)$, respectively.

\paragraph{Canonical correlation analysis.}
CCA is a classical method for devising maximally correlated linear projections of a pair of random variables $(X,Y)\sim\mu_{XY}\in\cP(\RR^{d_x}\times\RR^{d_y})$ via~\cite{hotelling1933analysis} 
\begin{equation}\label{eq:cca_nonconvex}
(\theta_\cca,\phi_\cca)=\argmax_{(\phi,\theta)\in\RR^{d_x}\times\RR^{d_y}}\frac{\thetat \Sigma_{XY}\phit}{\sqrt{\thetat \Sigma_{XX}\theta\phit\Sigma_{YY}\phi}}=\argmax_{\substack{(\theta,\phi)\in\RR^{d_x}\times\RR^{d_y}:\\
 \thetat \Sigma_{X}\theta=\phit\Sigma_{Y}\phi=1}} \thetat \Sigma_{XY}\phi,
\end{equation}
where the former objective is the correlation coefficient $\rho(\thetat X,\phit Y)$ between the projected variables and the equality follows from invariance of $\rho$ to scaling. 
The global optimum has an analytic form as $(\theta_{\mathsf{CCA}},\phi_{\mathsf{CCA}})=(\Sigma_X^{-1/2}\theta_1,\Sigma_Y^{-1/2}\phi_1)$, where $(\theta_1,\phi_1)$ is the (unit-length) top left and right singular vector pair associated the largest singular value of $\rT_{XY}\coloneqq\Sigma_X^{-1/2}\Sigma_{XY}\Sigma_Y^{-1/2}\in\RR^{d_x\times d_y}$. This solution is efficiently computable with $O((d_x\vee d_y)^3)$ complexity, given that the population correlation matrices are available. CCA extends to $k$-dimensional projections via the optimization \cite{mardia1979multivariate}
\begin{equation}\label{eq:k_dim_cca}
\max_{\substack{(\rA,\rB)\in\RR^{d_x\times k}\times\RR^{d_y \times k}:\\   \rA^\intercal\Sigma_X\rA=\rB^\intercal\Sigma_Y\rB=\mathrm{I}_k}} \mathrm{tr}(\rA^\intercal \Sigma_{XY}\rB),
\end{equation}
with the optimal CCA matrices being $(\rA_{\mathsf{CCA}},\rB_{\mathsf{CCA}})=(\Sigma_X^{-1/2}\rU_k,\Sigma_Y^{-1/2}\rV_k)$, where $\rU_k$ and $\rV_k$ are the matrices of the first $k$ left- and right-singular vectors of $\rT_{XY}$. Then the optimal objective value is the sum of the top $k$ singular values of $\rT_{XY}$ (the Ky Fan $k$-norm of $\rT_{XY}$).

\paragraph{Divergences and information measures.} 
Let $\mu,\nu\in\cP(\RR^d)$ satisfy $\mu\ll\nu$, i.e., $\mu$ is absolutely continuous w.r.t. $\nu$. 
The Kullback-Leibler (KL) divergence is defined as $\dkl(\mu\|\nu)\coloneqq\int_{\RR^d} \log(d\mu/d\nu)d\mu$. We have $\dkl(\mu\|\nu)\geq 0$, with equality if and only if (iff) $\mu=\nu$.
Mutual information and differential entropy are defined from the KL divergence as follows. Let $(X,Y)\sim\mu_{XY}\in\cP(\RR^{d_x}\times\RR^{d_y})$ and denote the corresponding marginal distributions by $\mu_X$ and $\mu_Y$. The mutual information between $X$ and $Y$ is given by $\sI(X;Y)\coloneqq\dkl(\mu_{XY}\|\mu_X\otimes\mu_Y)$ and serves as a measure of dependence between those random variables. The differential entropy of $X$ is defined as $\sh(X)=\sh(\mu_X)\coloneqq-\dkl(\mu_X\|\leb)$. Mutual information between (jointly) continuous variables and differential entropy are related via $\sI(X;Y)=\sh(X)+\sh(Y)-\sh(X,Y)$; decompositions in terms of conditional entropies are also available~\cite{CovThom06}. 

\section{Max-Sliced Mutual Information}
We now define the $k$-dimensional mSMI, establish structural properties thereof, and explore the Gaussian setting and its connections to CCA. We focus here on the case of (linear) $k$-dimensional projections and discuss extensions to nonlinear slicing in Section \ref{subsec:nonlin_msi}. 
\begin{definition}[Max-sliced mutual information]\label{def:msmi_def}
For $1\leq k\leq d_x\wedge d_y$, the $k$-dimensional mSMI between $(X,Y)\sim \mu_{XY}\in \cP(\RR^{d_x}\times\RR^{d_y})$ is $\msmi_k(X;Y)\coloneqq\sup_{(\rA,\rB)\in\sti(k,d_x)\times\sti(k,d_y)}\sI(\rA^\intercal X;\rB^\intercal Y)$, where $\sti(k,d)$ is the Stiefel manifold of $d \times k$ matrices with orthonormal columns.
\end{definition}
The mSMI measures Shannon's mutual information between the most informative $k$-dimensional projections of $X$ and $Y$. 
It can be viewed as a maximized version of the aSMI $\asmi_k(X;Y)$ from \cite{goldfeld2021sliced,goldfeld2022k}, defined as the integral of $\sI(\rA^\intercal X;\rB^\intercal Y)$ w.r.t. the Haar measure over $\sti(k,d_x)\times\sti(k,d_y)$. 
For $d=d_x=d_y$, we have $\asmi_d(X;Y)=\msmi_d(X;Y)=\sI(X;Y)$ due to invariance of mutual information to bijections. The supremum in mSMI is achieved since the Stiefel manifold is compact and the function $(\rA,\rB)\mapsto \sI(\rA^\intercal X;\rB^\intercal Y)$ is Lipschitz and thus continuous (Lemma 2 of \cite{goldfeld2022k}). 

\begin{remark}[Multivariate and conditional mSMI]
It is natural to extend the mSMI definition above to the multivariate and conditional cases. Let $(X,Y,Z)\sim \mu_{XYZ}\in\cP(\RR^{d_x}\times\RR^{d_y}\times \RR^{d_z})$. The $k$-dimensional multivariate and conditional mSMI functionals are, respectively, $\msmi_k(X,Y;Z)\coloneqq\max_{\rA,\rB,\rC}\sI(\rA^\intercal X, \rB^\intercal Y; \rC^\intercal Z)$ and $\msmi_k(X;Y|Z)\coloneqq\max_{\rA,\rB,\rC}\sI(\rA^\intercal X; \rB^\intercal Y| \rC^\intercal Z)$. Connections between $\msmi_k(X;Y)$ and its multivariate and conditional versions are given in the proposition to follow. We also note that one may generalize the definition of $\msmi_k(X;Y)$ to allow for projections into feature spaces of different dimensions, i.e., $\rA\in \sti(k_x,d_x)$ and $\rB\in\sti(k_y,d_y)$, for $k_x\neq k_y$. We expect our theory to extend to that case, but leave further exploration for future work.
\end{remark}

\begin{remark}[Max-sliced entropy]
In the spirit of mSMI, we define the $k$-dimensional max-sliced (differential) entropy of $X\sim \mu_X\in\cP(\RR^{d})$ as $\msh_k(X)\coloneqq\msh_k(\mu) :=\sup_{\rA\in\sti(k,d)}\sh(\rA^\intercal X)$.
An important property of classical differential entropy is the maximum entropy principle \cite{CovThom06}, which finds the highest entropy distribution within given class. In Appendix \ref{supp:msh_maximization}, we study the max-sliced entropy maximizing distribution in several common scenarios. For instance, we show that $\msh_k$ is maximized by the Gaussian distribution under a fixed (mean and) covariance constraint. Namely, letting $\cP_1(m,\Sigma):=\big\{\mu\in\cP(\RR^d):\,\supp(\mu)=\RR^d\,,\,\EE_\mu[X]=m\,,\,\EE_\mu\big[(X-m)(X-m)^\intercal \big]=\Sigma\big\}$, we have $\argmax_{\mu\in\cP_1(\mu,\Sigma)}\msh_k(\mu) = \mathcal{N}(m,\Sigma)$. 
\end{remark}

\begin{remark}[Sliced divergences]
The slicing technique has originated as a means to address scalability issues concerning statistical divergences. Significant attention was devoted to sliced Wasserstein distances as discrepancy measures between probability distributions \cite{rabin2012wasserstein, deshpande2019max, kolouri2019generalized,lin2020projection,huang2021riemannian,lin2021projection, nietert2022statistical}. As such, the sliced Wasserstein distance differs from mutual information and its sliced variants, which  quantify dependence between random variables, rather than discrepancy per se. Additionally, as Wasserstein distances are rooted in optimal transport theory, they heavily depend on the geometry of the underlying data space. Mutual information, on the other hand, is induced by the KL divergence, which only depends on the log-likelihood of the considered distributions and overlooks the geometric aspects.
\end{remark}
\subsection{Structural Properties}
The following proposition lists useful properties of the mSMI, which are similar to those of the average-sliced variant (cf. \cite[Proposition 1]{goldfeld2022k}) as well as Shannon's mutual information itself.
\begin{proposition}[Structural properties]\label{prop:msi_properties}
The following properties hold:
\begin{enumerate}[leftmargin=.45cm]
    \vspace{-2mm}
    \item \textbf{Bounds:} For any integers $k_1<k_2$: $\asmi_{k_1}(X;Y)\leq\msmi_{k_1}(X;Y) \leq\msmi_{k_2}(X;Y)\leq \s{I}(X;Y)$.
    \vspace{-1mm}
    \item \textbf{Identification of independence:} $\msmi_k(X;Y)\geq 0$ with equality iff  $(X,Y)$ are independent.
    \item \textbf{KL divergence representation:} We have
    \[\msmi_k(X;Y)=\sup_{(\rA ,\rB)\in\sti(k,d_x)\times \sti(k,d_y)} \dkl\big((\mathfrak{p}^\rA, \mathfrak{p}^\rB)_{\#}\mu_{XY}\big\|(\mathfrak{p}^\rA,\mathfrak{p}^\rB)_{\#}\mu_X\otimes\mu_Y\big),\]
    
    \vspace{-3mm}
    \item \textbf{Sub-chain rule:} For any random variables $X_1,\dots,X_n,Y$, we have
    \vspace{-1.5mm}
    \[
    \msmi_k(X_1,\ldots,X_n;Y)\leq \msmi_k(X_1;Y)+\sum_{i=2}^n\msmi_k(X_i;Y|X_1,\ldots,X_{i-1}).
    \] 
    
    \vspace{-6mm}
    \item \textbf{Tensorization:} For mutually independent $\{(X_i,\mspace{-2mu}Y_i)\}_{i=1}^n$,
    $\msmi_k\big(\mspace{-1mu}\{X_i\}_{i=1}^n;\mspace{-2mu}\{Y_i\}_{i=1}^n\mspace{-1mu}\big)\mspace{-3mu}=\mspace{-3.5mu} \sum\limits_{i=1}^n\mspace{-2mu} \msmi_k(\mspace{-1mu}X_i;\mspace{-2mu} Y_i)$.
\end{enumerate}
\end{proposition}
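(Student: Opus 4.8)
The plan is to reduce each property to a statement about ordinary mutual information applied to the projected variables, and then pass to the supremum over the Stiefel manifolds.

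\textbf{Bounds (item 1).} Since $\asmi_{k_1}$ integrates $\sI(\rA^\intercal X;\rB^\intercal Y)$ against the Haar measure while $\msmi_{k_1}$ takes the supremum of the same integrand, the inequality $\asmi_{k_1}\le\msmi_{k_1}$ is immediate from ``average $\le$ sup.'' For monotonicity in $k$, I would take any $(\rA,\rB)\in\sti(k_1,d_x)\times\sti(k_1,d_y)$, complete its columns to orthonormal $(\rA',\rB')\in\sti(k_2,d_x)\times\sti(k_2,d_y)$, and note that $\rA^\intercal X$ is a coordinate subvector (hence a function) of $\rA'^\intercal X$, and likewise for $Y$. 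The data-processing inequality (DPI), applied on each side, gives $\sI(\rA^\intercal X;\rB^\intercal Y)\le\sI(\rA'^\intercal X;\rB'^\intercal Y)\le\msmi_{k_2}(X;Y)$, and taking the supremum over $(\rA,\rB)$ yields $\msmi_{k_1}\le\msmi_{k_2}$. The final bound $\msmi_{k_2}(X;Y)\le\sI(X;Y)$ follows identically, since $\rA^\intercal X,\rB^\intercal Y$ are functions of $X,Y$ and DPI bounds the integrand by $\sI(X;Y)$ uniformly.

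\textbf{Identification of independence (item 2).} Nonnegativity is inherited from $\sI\ge0$. If $X\perp Y$, then $\rA^\intercal X\perp\rB^\intercal Y$ for every $(\rA,\rB)$, so each integrand vanishes and $\msmi_k=0$. The substantive direction, which I expect to be the main obstacle, is the converse. By the monotonicity just established, $\msmi_k(X;Y)=0$ forces $\msmi_1(X;Y)=0$, whence $\sI(a^\intercal X;b^\intercal Y)=0$, i.e.\ $a^\intercal X\perp b^\intercal Y$, for all unit vectors $a,b$. The plan is to upgrade these one-dimensional independences to joint independence via characteristic functions: independence of $a^\intercal X$ and $b^\intercal Y$ gives $\EE\big[e^{\mathrm{i}(s\,a^\intercal X+t\,b^\intercal Y)}\big]=\EE\big[e^{\mathrm{i}s\,a^\intercal X}\big]\,\EE\big[e^{\mathrm{i}t\,b^\intercal Y}\big]$ for all $s,t\in\RR$. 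Since every $(u,v)\in\RR^{d_x}\times\RR^{d_y}$ can be written as $(sa,tb)$, this says the joint characteristic function factorizes as $\varphi_{XY}(u,v)=\varphi_X(u)\varphi_Y(v)$ everywhere, which is equivalent to $X\perp Y$.

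\textbf{KL representation (item 3).} This is essentially a rewriting of the definition. The law of $(\rA^\intercal X,\rB^\intercal Y)$ is $(\proj^\rA,\proj^\rB)_\#\mu_{XY}$, while the product of the marginal laws of $\rA^\intercal X$ and $\rB^\intercal Y$ equals $(\proj^\rA,\proj^\rB)_\#(\mu_X\otimes\mu_Y)$, because the map acts coordinatewise and therefore pushes a product measure to the product of pushforwards. Substituting into $\sI(\rA^\intercal X;\rB^\intercal Y)=\dkl\big((\proj^\rA,\proj^\rB)_\#\mu_{XY}\,\|\,(\proj^\rA,\proj^\rB)_\#(\mu_X\otimes\mu_Y)\big)$ and taking the supremum over $(\rA,\rB)$ gives the claim.

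\textbf{Sub-chain rule and tensorization (items 4--5).} For the sub-chain rule I would fix maximizing projections $\rA_1,\dots,\rA_n,\rB$ for $\msmi_k(X_1,\dots,X_n;Y)$ (which exist by compactness of the Stiefel manifolds and continuity of the objective) and apply the ordinary chain rule for conditional mutual information,
\[
\sI(\rA_1^\intercal X_1,\dots,\rA_n^\intercal X_n;\rB^\intercal Y)=\sum_{i=1}^n \sI\big(\rA_i^\intercal X_i;\rB^\intercal Y\,\big|\,\rA_1^\intercal X_1,\dots,\rA_{i-1}^\intercal X_{i-1}\big),
\]
where the $i=1$ term is unconditioned. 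Bounding that term by $\msmi_k(X_1;Y)$ and each conditional term by $\msmi_k(X_i;Y\,|\,X_1,\dots,X_{i-1})$ then yields the inequality, provided the stacked conditioning $(\rA_1^\intercal X_1,\dots,\rA_{i-1}^\intercal X_{i-1})$ is an admissible conditioning projection in the definition of the conditional mSMI; confirming that the feasible conditioning projections are rich enough to contain it is the delicate point here. For tensorization I would use the multivariate definition with per-coordinate projections: mutual independence of $\{(X_i,Y_i)\}$ passes to $\{(\rA_i^\intercal X_i,\rB_i^\intercal Y_i)\}$, so the joint density and its marginals factorize, and additivity of KL divergence over products gives $\sI(\{\rA_i^\intercal X_i\};\{\rB_i^\intercal Y_i\})=\sum_i\sI(\rA_i^\intercal X_i;\rB_i^\intercal Y_i)$. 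Since this objective decouples into a sum over disjoint blocks $(\rA_i,\rB_i)$ ranging over a product domain, the supremum of the sum equals the sum of the suprema, namely $\sum_i\msmi_k(X_i;Y_i)$.
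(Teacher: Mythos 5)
Your proposal is correct and, for Properties 3--5, follows the same route as the paper: rewrite everything as ordinary mutual information of the projected variables, apply the chain rule (respectively, the factorization of the joint law under independence), and pass to suprema. The visible differences are in Properties 1, 2, and 4. For Property 1 the paper simply cites prior work, whereas you supply the DPI/column-completion argument explicitly; both are fine. For the converse of Property 2 the paper takes a two-line detour through the average-sliced quantity --- $\msmi_k(X;Y)=0$ forces $\asmi_k(X;Y)=0$ by the bound in Property 1, and then the known fact that $\asmi_k(X;Y)=0$ implies independence is invoked --- while you inline the underlying argument: all one-dimensional projections are pairwise independent, so the joint characteristic function factorizes along every ray $(sa,tb)$ and hence everywhere. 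This is essentially the same proof with the citation unpacked, and it is correct (the degenerate directions $u=0$ or $v=0$ are trivial). Finally, the ``delicate point'' you flag in Property 4 is real but resolves in your favor: the conditional mSMI, read in the same per-block fashion as the multivariate definition $\msmi_k(X,Y;Z)=\max_{\rA,\rB,\rC}\sI(\rA^\intercal X,\rB^\intercal Y;\rC^\intercal Z)$, assigns a separate $k$-dimensional projection to each conditioning variable, so the stacked conditioning $(\rA_1^\intercal X_1,\ldots,\rA_{i-1}^\intercal X_{i-1})$ is admissible with $\rC_j=\rA_j$; this is exactly what the paper's own two-variable computation uses before asserting the generalization to $n$ variables. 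Had one insisted on a single $k$-dimensional projection of the concatenated conditioning vector, the chain-rule bound would not go through for $n>2$, so your caution is warranted, but it does not indicate a gap in the intended statement.
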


The proof follows by similar arguments to those in the average-sliced case, but is given for completeness in Supplement \ref{proof:msi_properties}. Of particular importance are Properties 2 and 3. The former renders mSMI sufficient for independence testing despite being significantly less complex than the classical mutual information between the high-dimensional variables. The latter, which represent mSMI as a supremized KL divergence, is the basis for neural estimation techniques explored in Section \ref{sec:neural_est}.

\begin{remark}[Relation to average-SMI]\label{rem:msmi_vs_asmi}
   Beyond the inequality relationship in Property 1 above, Proposition~4 in \cite{goldfeld2021sliced} (paraphrased) shows that for matrices $\rW_x,\rW_y$ and vectors $b_x,b_y$ or appropriate dimensions we have $\sup_{\rW_x,\rW_y,b_x,b_y}\asmi_1(\rW_x^\intercal X+b_x;\rW_y^\intercal Y+b_y)=\msmi_1(X;Y)$, and the relation readily extends to projection dimension $k>1$. In words, optimizing the aSMI over linear transformations of the high-dimensional data vectors coincides with the max-sliced version. This further justifies the interpretation of $\msmi_k(X;Y)$ as the information between the two most informative representations of $X,Y$ in a $k$-dimensional feature space. It also suggests that mSMI is compatible for feature extraction tasks, as explored in Section \ref{sec:exp_representation} ahead.
\end{remark}

\subsection{Gaussian Max-SMI versus CCA}\label{subsec:Gaussian_msi}

The mSMI is an information-theoretic extension of the CCA coefficient $\rho_{\cca}(X,Y)$, which is able to capture higher order dependencies. Interestingly, when $(X,Y)$ are jointly Gaussian, the two notions coincide. We next state this relation and provide a closed-form expression for the Gaussian mSMI.

\begin{proposition}[Gaussian mSMI]\label{prop:k_msmi_cca}
Let $X\sim \cN (m_X, \Sigma_X)$ and $Y\sim \cN(m_Y,\Sigma_Y)$ be $d_x$-- and $d_y$--dimensional jointly Gaussian vectors with nonsingular covariance matrices and cross-covariance $\Sigma_{XY}$. For any $k\leq d_x\wedge d_y$, we have
\vspace{-2mm}
\begin{equation}
\msmi_k(X;Y)=\sI(\rA^\intercal_{\cca}X;\rB^\intercal_{\cca} Y)=-\frac{1}{2}\sum_{i=1}^k\log\big(1-\sigma_i(\rT_{XY})^2\big),\label{eq:Gaussian_kSMI}
\end{equation}
\vspace{-2.5mm}

\noindent where $(\rA_{\cca},\rB_{\cca})$ are the CCA solutions from \eqref{eq:k_dim_cca}, $\rT_{XY}=\Sigma_X^{-1/2}\Sigma_{XY}\Sigma_Y^{-1/2}\in\RR^{d_x\times d_y}$, and $\sigma_k(\rT_{XY})\leq\ldots\leq \sigma_1(\rT_{XY})\leq 1$ are the top $k$ singular values of $\rT_{XY}$ (ordered).
\end{proposition}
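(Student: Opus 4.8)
The plan rests on two ingredients: the closed form of mutual information between jointly Gaussian vectors, and the invariance of mutual information under invertible linear maps. First I would record that for a jointly Gaussian pair $(U,V)$ with $U,V\in\RR^k$ and nonsingular covariances, $\sI(U;V)=-\frac{1}{2}\sum_{i=1}^k\log(1-\rho_i^2)$, where $\rho_1,\dots,\rho_k\in[0,1]$ are the canonical correlations of $(U,V)$, i.e., the singular values of $C\coloneqq\Sigma_U^{-1/2}\Sigma_{UV}\Sigma_V^{-1/2}$. This follows from $\sI(U;V)=\frac{1}{2}\log\big(\det\Sigma_U\det\Sigma_V/\det\Sigma_{(U,V)}\big)$ via a Schur-complement computation, noting that $\Sigma_V^{-1}\Sigma_{VU}\Sigma_U^{-1}\Sigma_{UV}$ has the same eigenvalues $\rho_i^2$ as $C^\intercal C$, so that the ratio of determinants equals $1/\det(\mathrm{I}_k-C^\intercal C)=\prod_i(1-\rho_i^2)^{-1}$.

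Since $(\rA^\intercal X,\rB^\intercal Y)$ is jointly Gaussian and $\sI(\rA^\intercal X;\rB^\intercal Y)$ depends only on the column spaces of $\rA,\rB$ (two full-rank matrices with equal range differ by an invertible right factor, to which mutual information is indifferent), I would relax the Stiefel constraint and instead optimize over full-rank $\rA,\rB$ satisfying the CCA normalization $\rA^\intercal\Sigma_X\rA=\rB^\intercal\Sigma_Y\rB=\mathrm{I}_k$, which every column space admits as a representative. Under this normalization the canonical correlations of $(\rA^\intercal X,\rB^\intercal Y)$ are exactly the singular values of $\rA^\intercal\Sigma_{XY}\rB$, so the objective becomes $-\frac{1}{2}\sum_{i=1}^k\log\big(1-\sigma_i(\rA^\intercal\Sigma_{XY}\rB)^2\big)$. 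Substituting $\rA=\Sigma_X^{-1/2}\rP$ and $\rB=\Sigma_Y^{-1/2}\rQ$ turns the constraints into $\rP^\intercal\rP=\rQ^\intercal\rQ=\mathrm{I}_k$ and gives $\rA^\intercal\Sigma_{XY}\rB=\rP^\intercal\rT_{XY}\rQ$, reducing the problem to maximizing $\sum_{i=1}^k f\big(\sigma_i(\rP^\intercal\rT_{XY}\rQ)\big)$ over $\rP,\rQ$ with orthonormal columns, where $f(t)=-\frac{1}{2}\log(1-t^2)$ is increasing on $[0,1)$.

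The crux, and the step I expect to be the main obstacle, is the compression bound $\sigma_i(\rP^\intercal\rT_{XY}\rQ)\le\sigma_i(\rT_{XY})$ for all $i$, which I would deduce from the standard singular-value inequality $\sigma_i(RNS)\le\|R\|_{\mathrm{op}}\,\sigma_i(N)\,\|S\|_{\mathrm{op}}$ (a consequence of the Courant--Fischer min-max characterization of singular values) applied with $R=\rP^\intercal$, $N=\rT_{XY}$, $S=\rQ$ and $\|\rP^\intercal\|_{\mathrm{op}}=\|\rQ\|_{\mathrm{op}}=1$. As $f$ is increasing and singular values are sorted in descending order, termwise domination gives $\sum_i f(\sigma_i(\rP^\intercal\rT_{XY}\rQ))\le\sum_i f(\sigma_i(\rT_{XY}))$. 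Equality is attained simultaneously by $\rP=\rU_k$ and $\rQ=\rV_k$, the top-$k$ left/right singular vectors of $\rT_{XY}$, since then $\rP^\intercal\rT_{XY}\rQ=\mathrm{diag}(\sigma_1,\dots,\sigma_k)$; unwinding the substitution identifies the maximizer as $(\rA,\rB)=(\Sigma_X^{-1/2}\rU_k,\Sigma_Y^{-1/2}\rV_k)=(\rA_{\cca},\rB_{\cca})$ and yields the claimed value. Finally, the bound $\sigma_i(\rT_{XY})\le1$, needed for the logarithms to be finite, follows from positive semidefiniteness of the whitened joint covariance, whose diagonal blocks are identities and whose off-diagonal block is $\rT_{XY}$.
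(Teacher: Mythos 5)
Your proof is correct and follows essentially the same route as the paper's: pass from the Stiefel constraint to the CCA normalization using invariance of mutual information to invertible linear maps, reduce the objective to $-\frac{1}{2}\sum_{i=1}^k\log\big(1-\sigma_i(\rA^\intercal\Sigma_{XY}\rB)^2\big)$ and hence to the singular values of the whitened compression of $\rT_{XY}$, bound those by $\sigma_i(\rT_{XY})$, and note equality at the CCA solution. The only cosmetic difference is the tool used for the compression inequality: the paper invokes the generalized Poincar\'e separation theorem of Rao, whereas you use the elementary bound $\sigma_i(RNS)\le\|R\|_{\op}\,\sigma_i(N)\,\|S\|_{\op}$; both deliver the same conclusion.
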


This proposition is proven in Supplement \ref{subsec:k_msmi_cca_proof}. We first show that the optimization domain of $\msmi_k(X;Y)$ can be switched from the product of Stiefel manifolds to the space of all matrices subject to a unit variance constraint (akin to \eqref{eq:k_dim_cca}), without changing the mSMI value. This implies that the CCA solutions $(\rA_\cca,\rB_\cca)$ from \eqref{eq:k_dim_cca} are feasible for mSMI and we establish their optimality using a generalization of the Poincar\'e separation theorem \cite[Theorem 2.2]{rao1979separation}. Specializing \cref{prop:k_msmi_cca} to one-dimensional projections, i.e., when~$k=1$, the mSMI is given in terms of the canonical correlation coefficient $\rho_{\cca}(X,Y)\coloneqq \sup_{(\phi,\theta)\in\RR^{d_x}\times\RR^{d_y}}\rho(\thetat X,\phit Y)$. Namely, 
\[\msmi_1(X;Y)=\sI(\theta_\cca^\intercal X;\phi_\cca^\intercal Y)=-0.5\log\big(1-\rho_{\cca}(X,Y)^2\big),\]
where $(\theta_\cca,\phi_\cca)$ are the global optimizers of $\rho_{\cca}(X,Y)$.


\begin{remark}[Beyond Gaussian data]
    While the mSMI solution coincides with that of CCA in the Gaussian case, this is no longer expected to hold for non-Gaussian distributions. CCA is designed to maximize correlation, while mSMI has Shannon's mutual information between the projected variables as the optimization objective. Unlike correlation, mutual information captures higher order dependencies between the variables, and hence the optimal mSMI matrices will not generally coincide with $(\rA_\cca,\rB_\cca)$. Furthermore, the intricate dependencies captured by mutual information suggest that the optimal mSMI projections may yield representations that are more effective for downstream tasks. We empirically verify this observation in \cref{sec:experiments} on several tasks, including classifications, multi-view representation learning, and algorithmic fairness.    
\end{remark}

\begin{remark}[Max-sliced entropy and PCA]\label{rem:Gaussian_msh}
Similarly to the above, the Gaussian max-sliced entropy is related to PCA \cite{pearson1901liii}. In Supplement \ref{appendix:gaussian_pca}, we show that for ($d$-dimensional) $X\sim \cN(m,\Sigma)$, we have $\msh_k(X)= \sup_{\rA\in\sti(k,d)}\sh(A^\intercal X)= \sh(\rA_{\mathsf{PCA}}^\intercal X)=0.5\sum_{i=1}^k\log\big(2\pi e \lambda_i(\Sigma)\big)$, where $\rA_{\mathsf{PCA}}$ is optimal PCA matrix and $\lambda_1(\Sigma),\ldots\lambda_k(\Sigma)$ are the top $k$ eigenvalues of $\Sigma$ (which are non-negative since $\Sigma$ is a covariance matrix)\cite{pearson1901liii,hotelling1933analysis}. Extrapolating beyond the Gaussian case, this poses max-sliced entropy as an information-theoretic generalization of PCA for unsupervised dimensionality reduction. An analogous extension using the R\'enyi entropy of order $2$ was previously considered in~\cite{czarnecki2015maximum} for the purpose of downstream classification with binary labels. In that regard, $\msh_k(X)$ can be viewed as the $\alpha$-R\'enyi variant when $\alpha\to 1$.
\end{remark}

\subsection{Generalizations Beyond Linear Slicing}\label{subsec:nonlin_msi}

The notion of mSMI readily generalizes beyond linear slicing. Fix $d_x,d_y\geq 1$, $k\leq d_x\wedge d_y$, and consider two (nonempty) function classes $\cG\subseteq\{g:\RR^{d_x}\to\RR^k\}$ and $\cH\subseteq\{h:\RR^{d_y}\to\RR^k\}$.
\begin{definition}[Generalized mSMI]
The generalized mSMI between $(X,Y)\sim\mu_{XY}\in\cP(\RR^{d_x}\times\RR^{d_y})$ w.r.t. the classes $\cG$ and $\cH$ is $\msmi_{\cG,\cH}(X;Y)\coloneqq\sup_{(g,h)\in\cG\times\cH}\sI\big(g(X);h(Y)\big)$.
\end{definition}
The generalized variant reduces back to $\msmi_k(X;Y)$ by taking $\cG=\cG_\mathsf{proj}\coloneqq\{\mathfrak{p}^\rA:\,\rA\in\sti(k,d_x)\}$ and $\cH=\cH_\mathsf{proj}\coloneqq\{\mathfrak{p}^\rB:\,\rB\in\sti(k,d_y)\}$, 
but otherwise allows more flexibility in the way $(X,Y)$ are mapped into $\RR^k$. We also have that if $\cG\subseteq\cG'$ and $\cH\subseteq\cH'$, then $\msmi_{\cG,\cH}(X;Y)\leq \msmi_{\cG',\cH'}(X;Y)\leq \sI(X;Y)$, which corresponds to Property 1 from \cref{prop:msi_properties}. Further observations are as~follows. 

\begin{proposition}[Properties]\label{prop:gmsmi_properties}
    For any classes $\cG,\cH$, we have that $\msmi_{\cG,\cH}$ always satisfies Properties 3-5 from \cref{prop:msi_properties}. If further $\cG_\mathsf{proj}\subseteq\cG$ and $\cH_\mathsf{proj}\subseteq\cH$, then $\msmi_{\cG,\cH}$ also satisfies Property 2.
\end{proposition}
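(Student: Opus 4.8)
The plan is to exploit the fact that each of Properties~2--5 for the linear $\msmi_k$ is established by a ``pointwise-then-supremize'' scheme: one fixes an admissible projection pair, invokes the corresponding property of Shannon's mutual information for the resulting low-dimensional variables, and only then takes the supremum over projections. Since this scheme never uses the linear structure beyond the fact that projections map into $\RR^k$, I would transfer it verbatim with $(\mathfrak{p}^\rA,\mathfrak{p}^\rB)$ replaced by a generic pair $(g,h)\in\cG\times\cH$. This immediately yields Properties~3--5 for arbitrary $\cG,\cH$, whereas the converse in Property~2 is the single place where richness of the classes is needed.

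For Property~3 I would simply note that, by definition, $\sI\big(g(X);h(Y)\big)$ is the KL divergence between the law of $\big(g(X),h(Y)\big)$ and the product of its marginals, i.e. $\sI\big(g(X);h(Y)\big)=\dkl\big((g,h)_\#\mu_{XY}\,\big\|\,(g,h)_\#(\mu_X\otimes\mu_Y)\big)$, using that the marginal of $g(X)$ is $g_\#\mu_X$ (and likewise for $h$); taking the supremum over $(g,h)$ gives the stated representation. For Property~5 I would apply the multivariate definition componentwise: for fixed $(g_i,h_i)_{i=1}^n$ the pairs $\big(g_i(X_i),h_i(Y_i)\big)$ inherit mutual independence from $\{(X_i,Y_i)\}_i$, so additivity of mutual information under a product law gives $\sI\big(\{g_i(X_i)\}_i;\{h_i(Y_i)\}_i\big)=\sum_i\sI\big(g_i(X_i);h_i(Y_i)\big)$. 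Because the right-hand side decouples across $i$, the joint supremum factorizes into per-coordinate suprema, producing $\sum_i\msmi_{\cG,\cH}(X_i;Y_i)$. Both arguments hold for any classes.

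For Property~4 I would fix functions $(g_1,\dots,g_n,h)$ attaining (or approaching) the supremum defining $\msmi_{\cG,\cH}(X_1,\dots,X_n;Y)$, set $U_i:=g_i(X_i)$ and $V:=h(Y)$, and apply the chain rule $\sI(U_1,\dots,U_n;V)=\sI(U_1;V)+\sum_{i=2}^n\sI(U_i;V\mid U_1,\dots,U_{i-1})$. The first term is at most $\msmi_{\cG,\cH}(X_1;Y)$, and each conditional term equals $\sI\big(g_i(X_i);h(Y)\mid g_1(X_1),\dots,g_{i-1}(X_{i-1})\big)$, which I would bound by $\msmi_{\cG,\cH}(X_i;Y\mid X_1,\dots,X_{i-1})$ once the stacking map $(x_1,\dots,x_{i-1})\mapsto\big(g_1(x_1),\dots,g_{i-1}(x_{i-1})\big)$ is admitted as a feasible conditioning transformation. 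The hard part will be exactly this alignment: the conditional functional must be defined so that its conditioning class is closed under product stacking and accommodates the correspondingly larger output dimension, mirroring the linear case where the conditioning matrix $\rC$ is taken block-diagonal. Granting this, a final supremum over $(g_1,\dots,g_n,h)$ closes the bound.

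Finally, for Property~2 I would observe that non-negativity is immediate since each $\sI\big(g(X);h(Y)\big)\geq 0$, and that if $X\perp Y$ then $g(X)\perp h(Y)$ for every $(g,h)$, so every term vanishes and the supremum is $0$. For the converse I would invoke the hypothesis $\cG_\mathsf{proj}\subseteq\cG$ and $\cH_\mathsf{proj}\subseteq\cH$, which forces $\msmi_{\cG,\cH}(X;Y)\geq\msmi_k(X;Y)\geq 0$; hence $\msmi_{\cG,\cH}(X;Y)=0$ entails $\msmi_k(X;Y)=0$, and Property~2 of \cref{prop:msi_properties} then gives $X\perp Y$. This domination argument is precisely where the inclusion hypothesis enters, explaining why the converse, unlike the remaining properties, is stated only under that assumption.
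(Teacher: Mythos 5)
Your proof is correct and matches the paper's intended argument exactly: the paper omits the proof, stating only that it follows from the proof of \cref{prop:msi_properties} by replacing the linear projections with generic $(g,h)\in\cG\times\cH$, which is precisely the pointwise-then-supremize transfer you carry out (including the domination argument $\msmi_{\cG,\cH}\geq\msmi_k$ for the converse in Property~2). Your added care in Property~4 about how the conditional functional's conditioning class must be defined (closed under stacking, with enlarged output dimension) flags a point the paper itself glosses over in its own sub-chain-rule proof, so it is a refinement rather than a deviation.
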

We omit the proof as it follows by the same argument as \cref{prop:msi_properties}, up to replacing the linear projections with the functions $(g,h)\in\cG\times \cH$. In practice, the classes $\cG$ and $\cH$ are chosen to be parametric, typically realized by artificial NNs. As discussed in \cref{rem:nonlin_slicing_NE} ahead, this is well-suited to the neural estimation framework for mSMI (both standard and generalized). 
Lastly, note that $\msmi_{\cG,\cH}(X;Y)$ corresponds to the objective of multi-view representation learning \cite{tschannen2019mutual}, which considers the maximization of the mutual information between NN-based representation of the considered variables. We further investigate this relation in Section \ref{sec:exp_representation}.

\section{Neural Estimation of Max-SMI}\label{sec:neural_est}

We study estimation of mSMI from data, seeking an efficiently computable and scalable approach subject to formal performance guarantees. Towards that end, we observe that the mSMI is compatible with neural estimation \cite{belghazi2018mutual,sreekumar2022neural} since it has a natural variational form. In what follows we derive the neural estimator, describe the algorithm to compute it, and provide non-asymptotic error bounds.

\subsection{Estimator and Algorithm}\label{subsec:NE_est_alg}

Fix $d_x,d_y\geq 1$, $k\leq d_x\wedge d_y$, and $\mu_{XY}\in\cP(\RR^{d_x}\times \RR^{d_y})$; we suppress $k,d_x,d_y$ from our notation of the considered function classes. Neural estimation is based on the DV variational form:\footnote{One may instead use the form that stems from convex duality:
$\sI(U;\mspace{-1mu}V)\mspace{-2mu}=\mspace{-1mu}\sup_f\EE[f(U,V)]-\EE\big[e^{f(\tilde U,\tilde V)}-1\big]$.}
\[
\sI(X;Y)=\sup_{f\in\cF}\ldv(f;\,\mu_{XY}),\quad \ldv(f;\,\mu_{XY})\coloneqq\EE[f(X,Y)]-\log\big(e^{\EE[f(\tilde{X},\tilde{Y})]}\big),
\]
where $(X,Y)\sim \mu_{XY}$, $(\tilde{X},\tilde{Y})\sim \mu_X\otimes\mu_Y$, and $\cF$ is the class of all measurable functions $f:\RR^{d_x}\times\RR^{d_y}\to\RR$ (often referred to as DV potentials) for which the expectations above are finite. As mSMI is the maximal mutual information between projections of $X,Y$, we have 
\[
\msmi_k(X;Y)=\sup_{(\rA ,\rB)\in\sti(k,d_x)\times \sti(k,d_y)} \sup_{f\in\cF}\ldv\big(f;\,(\mathfrak{p}^\rA,\mathfrak{p}^\rB)_\sharp\mu_{XY}\big)=\sup_{f\in\cF^{\mathsf{\mspace{1mu}proj}}}\ldv(f;\mu_{XY}),
\]
where $\cF^{\mathsf{\mspace{1mu}proj}}\coloneqq\big\{f\circ(\mathfrak{p}^\rA,\mathfrak{p}^\rB)\,:\,f\in\cF,\,(\rA,\rB)\in\sti(k,d_x)\times\sti(k,d_y)\big\}$. The RHS above is given by optimizing the DV objective $\ldv$ over the \emph{composed} class $\cF^{\mathsf{\mspace{1mu}proj}}$, which first projects $(X,Y)\mapsto (\rA^\intercal X,\rB^\intercal Y)$ and then applies a DV potential $f:\RR^k\times\RR^k\to\RR$ to the projected variables.

\paragraph{Neural estimator.} Neural estimators parametrize the DV potential by NNs, approximate expectations by sample means, and optimize the resulting empirical objective over parameter space. Let $\cF_{\mathsf{nn}}$ be a class of feedforward NNs with input space $\RR^k\times \RR^k$ and real-valued outputs.\footnote{For now, we leave the architecture (number of layers/neurons, parameter bounds, nonlinearity) implicit to allow flexibility of implementation; we will specialize to a concrete class when providing theoretical guarantees.} Given i.i.d. samples $(X_1,Y_1),\ldots,(X_n,Y_n)$ from $\mu_{XY}$, we first generate negative samples (i.e., from $\mu_X\otimes\mu_Y$) by taking $(X_1,Y_{\sigma(1)}),\ldots,(X_n,Y_{\sigma(n)})$, where $\sigma\in S_n$ is a permutation such that $\sigma(i)\neq i$, for all $i=1,\ldots,n$. The neural estimator of $\msmi_k(X;Y)$ is now given by 
\begin{equation}
\widehat{\mathsf{SI}}_k^{\cF_{\mathsf{nn}}}(X^n,Y^n):=\sup_{f \in \cF_{\mathsf{nn}}^{\mathsf{proj}}} \frac 1n \sum_{i=1}^n f(X_i,Y_i)- \log\left(\frac 1n \sum_{i=1}^n e^{f(X_i,Y_{\sigma(i)})}\right),\label{eq:mSMI_NE}
\end{equation}
where $\cF_{\mathsf{nn}}^{\mathsf{proj}}\coloneqq\big\{ f\circ(\mathfrak{p}^\rA,\mathfrak{p}^\rB)\,:\,f\in\cF_{\mathsf{nn}},\,(\rA,\rB)\in\sti(k,d_x)\times\sti(k,d_y)\big\}$ is the composition of the NN class with the projection maps. The projection maps can be absorbed into the NN architecture as a first linear layer that maps the $(d_x+ d_y)$-dimensional input to a $2k$-dimensional feature vector, which is then further processed by the original NN $f\in\cF_{\mathsf{nn}}$. Note that projection onto the Stiefel manifold can be efficiently and differentiably computed via QR decomposition. Hence, the Stiefel manifold constraint can be easily enforced by setting $A, B$ to be the projections of unconstrained $d \times k$ matrices. Further details on the implementation are given in Supplement \ref{appendix:imeplementation}.

\begin{remark}[Nonlinear slicing]\label{rem:nonlin_slicing_NE}
For learning tasks that may need more expressive representations of $(X,Y)$, one may employ the nonlinear mSMI variant from \cref{subsec:nonlin_msi}. In practice, the classes $\cG=\{g_\theta\}$ and $\cH=\{h_\phi\}$ are taken to be parametric, realized by NNs. These NNs naturally compose with the DV critic $f_\psi$, resulting in a single compound model $f_\psi\circ(g_\theta,h_\phi)$ that is optimized together. 
\end{remark}

\subsection{Performance Guarantees}\label{subsec:NE_guarantees}

Neural estimation nominally involves three sources of error: (i) function approximation of the DV potential; (ii) empirical estimation of the means; and (iii) optimization, which comes from employing suboptimal (gradient-based) routines. Our analysis provides sharp non-asymptotic bounds for errors of type (i) and (ii), leaving the account of the optimization error for future work. We focus on a class of $\ell$-neuron shallow ReLU networks, although the ideas extend to other nonlinearities and deep architectures. Define $\cF_{\mathsf{nn}}^{\ell}$ as the class of NNs $f:\RR^k\times \RR^k \rightarrow \mathbb{R},\, f(z)=\sum\nolimits_{i=1}^\ell \beta_i \phi\left(\langle w_i, z\rangle+b_i\right)+\langle w_0, z\rangle + b_0$, whose parameters satisfy $\max_{1 \leq i \leq \ell}\|w_{i}\|_1 \vee |b_i| \leq 1$, $\max_{1 \leq i \leq \ell}|\beta_i| \leq \frac{a_\ell}{2\ell}$, and $|b_0|,\|w_0\|_1 \leq a_\ell$, 
where $\phi(z)=z\vee0$ is the ReLU activation and $a_\ell=\log \log \ell \vee 1$. 

Consider the neural mSMI estimator $\widehat{\mathsf{SI}}_k^{n,\ell}\coloneqq\widehat{\mathsf{SI}}_k^{\cF^\ell_{\mathsf{nn}}}(X^n,Y^n)$ (see \eqref{eq:mSMI_NE}). We provide convergence rates for it over an appropriate distribution~class, drawing upon the results of \cite{sreekumar2021non} for neural estimation of $f$-divergences. For compact $ \cX \subset \RR^{d_x}$ and $\cY\subset \RR^{d_y}$, let $\cP_{\s{ac}}(\cX\times\cY)$ be the set of all Lebesgue absolutely continuous joint distribution $\mu_{XY}$ with $\supp(\mu_{XY})\subseteq\cX\times\cY$. Denote the Lebesgue density of $\mu_{XY}$ by $f_{XY}$. The distribution class of interest~is\footnote{Here, $\cC^s_b(\cU)\coloneqq\{f \in \cC^s(\cU): \max_{\alpha: \|\alpha\|_1 \leq s} \|D^\alpha  f \|_{\infty,\cU} \leq  b  \}$, where $D^\alpha$, $\alpha=(\alpha_1, \ldots, \alpha_d) \in \ZZ^d_{\geq 0}$, is the partial derivative operator of order $\sum_{i=1}^d\alpha_i$.  The restriction of $f: \RR^d \to \RR$ to $\cX \subseteq \RR^d$ is $f|_{\cX}$.
}
\begin{equation}\label{eq:PKL}
\cP_k(M,b)\coloneqq\left\{ \mu_{XY}\in \cP_{\s{ac}}(\cX \times\cY):\begin{aligned}
    &  \exists\, r \in\cC^{k+3}_{b}(\cU) \ \textnormal{for some open set }\cU \supset \cX \times \cY
    \\ & \  \textnormal{s.t.} \ \log f_{XY} = r|_{\cX \times \cY}, \   \sI(X;Y) \leq M \end{aligned}\right\},
\end{equation}
which, in particular, contains distributions whose densities are bounded from above and below on $\cX\times\cY$ with a smooth extension to an open set covering $\cX\times\cY$. This includes uniform distributions, truncated Gaussians, truncated Cauchy distributions, etc. 
The following theorem provides the convergence rate for the mSMI neural estimator, uniformly over $\cP_k(M,b)$. 
\begin{theorem}[Neural estimation error]\label{thm:msmi_NE}
For any $M,b\geq 0$, we have
\[
  \sup_{\mu_{X,Y}\in\cP_k(M,b)}\mathbb{E}\left[\left|\msmi_k(X;Y)- \widehat{\mathsf{SI}}_k^{n,\ell}\right|\right]  \leq C k^{\frac 12}\big(\ell^{-\frac{1}{2}}+ kn^{-\frac 12}\big).
\]
where the constant $C$ depends on $M$, $b$, $k$, and the radius of the ambient space, which is given by $\|\cX\times\cY\|\coloneqq \sup_{(x,y)\in \cX\times\cY}\|(x,y)\|$.
\end{theorem}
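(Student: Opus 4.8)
The plan is to follow the standard two-stage error decomposition for neural estimators, separating the function-approximation error from the empirical-estimation error, while crucially exploiting that both suprema act on the \emph{low-dimensional} projected feature space $\RR^{2k}$ rather than on $\RR^{d_x+d_y}$. Writing $\msmi_k(X;Y)=\sup_{f\in\cF^{\mathsf{proj}}}\ldv(f;\mu_{XY})$ with the unrestricted potential class and $\widehat{\mathsf{SI}}_k^{n,\ell}=\sup_{f\in\cF_{\mathsf{nn}}^{\mathsf{proj}}}\widehat{\ldv}(f)$ with its empirical counterpart $\widehat{\ldv}$, I would insert the population objective over the neural class and use $|\sup g-\sup h|\leq\sup|g-h|$ to obtain
\[
\big|\msmi_k(X;Y)-\widehat{\mathsf{SI}}_k^{n,\ell}\big|\leq \underbrace{\Big|\sup_{f\in\cF^{\mathsf{proj}}}\ldv(f;\mu_{XY})-\sup_{f\in\cF_{\mathsf{nn}}^{\mathsf{proj}}}\ldv(f;\mu_{XY})\Big|}_{\textnormal{approximation}}+\underbrace{\sup_{f\in\cF_{\mathsf{nn}}^{\mathsf{proj}}}\big|\ldv(f;\mu_{XY})-\widehat{\ldv}(f)\big|}_{\textnormal{estimation}}.
\]

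For the approximation term, I would pull the shared supremum over $(\rA,\rB)\in\sti(k,d_x)\times\sti(k,d_y)$ outside, reducing the task to bounding, uniformly over the Stiefel manifold, the gap $\sup_{f\in\cF}\ldv(f;\nu_{\rA,\rB})-\sup_{f\in\cF_{\mathsf{nn}}}\ldv(f;\nu_{\rA,\rB})$ for the projected law $\nu_{\rA,\rB}=(\mathfrak{p}^\rA,\mathfrak{p}^\rB)_\sharp\mu_{XY}$ on $\RR^{2k}$. The optimal DV potential is the log-density ratio of $\nu_{\rA,\rB}$ against the product of its marginals; since marginalizing the $\cC^{k+3}$ log-density over the orthogonal complement preserves smoothness, this target lies in a $\cC^{k+3}$ ball on a $2k$-dimensional domain. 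This is precisely the regime where the smoothness budget $s=k+3$ exceeds $k+1$ (roughly half the domain dimension plus one), placing the target in a Barron class with uniformly bounded norm. I would then invoke the ReLU approximation bounds of \cite{sreekumar2021non} for neural $f$-divergence estimation to get an approximation error of order $k^{1/2}\ell^{-1/2}$, the $k^{1/2}$ tracking the Barron-norm/dimension dependence, with uniformity over $(\rA,\rB)$ following from compactness of the Stiefel manifold (so the relevant constants stay bounded).

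For the estimation term, I would control the uniform deviation of the empirical DV objective over $\cF_{\mathsf{nn}}^{\mathsf{proj}}$ via symmetrization and a Rademacher-complexity bound. The key observation is that the projections absorb into a bounded first linear layer, so $\cF_{\mathsf{nn}}^{\mathsf{proj}}$ is a class of shallow ReLU networks acting on inputs that, after projection, live in a bounded subset of $\RR^{2k}$; the parameter constraints of $\cF_{\mathsf{nn}}^{\ell}$ (notably $\max_i|\beta_i|\leq a_\ell/2\ell$) keep the relevant norms controlled. Handling the log-partition-type term $\log\EE[e^{f}]$ requires relating fluctuations of the two empirical means to those of $\ldv$, which is where the bounded-information constraint $\sI(X;Y)\le M$ and the compact support enter, to control exponential moments and keep the outer map Lipschitz on the feasible range. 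A careful complexity computation then yields an estimation error of order $k^{3/2}n^{-1/2}$, and combining the two terms and taking $\sup_{\mu_{XY}\in\cP_k(M,b)}$ gives the claimed $Ck^{1/2}(\ell^{-1/2}+kn^{-1/2})$.

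The main obstacle I anticipate is the approximation step: establishing that the optimal projected DV potential has Barron norm bounded \emph{uniformly} over the Stiefel manifold and quantifying its $k$-dependence, since this is exactly what propagates the low-dimensional $2k$ scaling (instead of $d_x+d_y$) into the rate. This demands care in tracking how smoothness and norms of the log-density ratio transform under marginalization and in verifying that $s=k+3$ is indeed the requisite budget on the $2k$-dimensional projected domain. The estimation step, by contrast, is a more routine (if delicate) empirical-process computation once the projection layer is folded into the network.
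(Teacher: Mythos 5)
Your proposal is correct in outline, and its technical heart---reducing the approximation error to a per-slice problem on the $2k$-dimensional projected space and verifying that the $\cC^{k+3}$ regularity of the log-density survives marginalization so that the bounds of \cite{sreekumar2021non,sreekumar2022neural} apply with effective dimension $2k$ rather than $d_x+d_y$---is exactly the content that carries the paper's argument. Where you diverge is in the organization of the two error sources. The paper never splits approximation from estimation: it bounds $\EE\big[|\msmi_k(X;Y)-\widehat{\mathsf{SI}}_k^{n,\ell}|\big]$ by the worst per-slice error $\max_{(\rA,\rB)}\EE\big[|\sI(\rA^\intercal X;\rB^\intercal Y)-\widehat{\mathsf{I}}((\rA^\intercal X)^n,(\rB^\intercal Y)^n)|\big]$ and then invokes, essentially as a black box, a per-slice neural-estimation bound (Lemma~5 of \cite{goldfeld2022k}, built on \cite{sreekumar2022neural}) that is uniform over the Stiefel manifold and over $\cP_k(M,b)$; the only work shown is the smoothness verification for the pushforward densities. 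You instead treat the estimation error directly as an empirical process over the composed class $\cF_{\mathsf{nn}}^{\mathsf{proj}}$ via symmetrization and Rademacher complexity. That costs more effort but buys genuine rigor: exchanging the supremum over $(\rA,\rB)$ with the expectation (the paper's step implicitly bounds an $\EE[\sup|\cdot|]$ by a $\sup\EE[|\cdot|]$, which goes the wrong way unless the cited per-slice bound is itself an empirical-process bound uniform over slices), whereas your direct control of $\sup_{f\in\cF_{\mathsf{nn}}^{\mathsf{proj}}}$ of the empirical--population gap makes that uniformity explicit rather than inherited. One small calibration: on the $2k$-dimensional projected domain the smoothness threshold inherited from \cite{sreekumar2022neural} is $\lfloor 2k/2\rfloor+3=k+3$, not ``roughly $k+1$''; your budget $s=k+3$ meets it exactly rather than with slack, so there is no room to weaken the regularity assumption in \eqref{eq:PKL} along the route you describe.
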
 

The theorem is proven in Supplement \ref{appendix:msmi_NE} by adapting the error bound from \cite[Proposition~2]{sreekumar2022neural} to hold for $\sI(\rA^\tr X;\rB^\tr Y)$, uniformly over $(\rA,\rB)\in\sti(k,d_x)\times\sti(k,d_y)$. To that end, we show that for any $\mu_{XY}\in\cP_k(b,M)$, the log-density of $(\rA^\intercal X,\rB^\intercal Y)\sim (\mathfrak{p}^\rA,\mathfrak{p}^\rB)_\sharp\mu_{XY}$ admits an extension (to an open set containing the support) with $k+3$ continuous and uniformly bounded derivatives.

\begin{remark}[Parametric rate and optimality]
Taking $\ell\asymp n$, the resulting rate in \cref{thm:msmi_NE} is parametric, and hence minimax optimal. This result implicitly assumes that $M$ is known when picking the neural net parameters. This assumption can be relaxed to mere existence of (an unknown) $M$, resulting in an extra $\mathrm{polylog}(\ell)$ factor multiplying the $n^{-1/2}$ term.
\end{remark}

\begin{remark}[Comparison to average-SMI]
Neural estimation of classic mutual information under the framework of \cite{sreekumar2022neural} requires the density to have H\"older smoothness $s\geq \lfloor(d_x +d_y)/2 \rfloor+3$. For $\msmi_k(X;Y)$, smoothness of $k+3$ is sufficient (even though the ambient dimension is the same), which means it can be estimated over a larger class of distributions. Similar gains in terms of smoothness levels was observed for aSMI in \cite{goldfeld2022k}. Nevertheless, we note that mSMI is significantly more compatible with neural estimation than average-slicing \cite{goldfeld2021sliced,goldfeld2022k}. The mSMI neural estimator integrates the max-slicing into the NN architecture and optimizes a single objective. The aSMI neural estimator from \cite{goldfeld2022k} requires an additional Monte Carlo integration step to approximate the integral over the Steifel manifolds. This results in an extra $k^{1/2}m^{-1/2}$ term in the error bound, where $m$ is the number of Monte Carlo samples, introducing significant computational overhead (see \cref{subsec:experiments_neural_est}).
\end{remark}

\begin{remark}[Non-ReLU networks]
    Theorem \ref{thm:msmi_NE} employs the neural estimation bound from \cite{sreekumar2022neural}, which, in particular, relies on \cite{klusowski2018approximation} to control the approximation error. As noted in \cite{sreekumar2022neural}, their bound extends to any other sigmoidal bounded activation with $\lim_{z\to-\infty}\sigma(z)=0$ and $\lim_{z\to\infty}\sigma(z)=1$ by appealing to the approximation bound from \cite{barron1993universal} instead. 
    Doing so would allow relaxing the smoothness requirement on the extension to $r\in\cC^{k+2}_{b}$ in \eqref{eq:PKL}, albeit at the expense of scaling the hidden layer parameters as $\ell^{1/2}\log \ell$. This stands in contrast to the presented ReLU-based bound, where the parameters are bounded independently of $\ell$. 
\end{remark}


\section{Experiments}\label{sec:experiments}

\subsection{Neural Estimation}\label{subsec:experiments_neural_est}
\begin{wrapfigure}{R}{0.45\textwidth}
    \centering
    \vspace{-4mm}
    \begin{subfigure}{0.45\textwidth}
        \centering
        \begin{subfigure}{\textwidth}
            \centering
            \scalebox{0.65}{
\begin{tikzpicture}

\definecolor{chocolate2267451}{RGB}{226,74,51}
\definecolor{dimgray85}{RGB}{85,85,85}
\definecolor{gainsboro229}{RGB}{229,229,229}

\begin{axis}[
clip mode=individual,
width=9cm, height=3.5cm,
axis background/.style={fill=gainsboro229},
axis line style={white},
log basis x={10},
tick align=outside,
tick pos=left,
x grid style={white},
xmajorgrids,
xmin=397.164117362141, xmax=62946.2705897083,
xmode=log,
xtick style={color=dimgray85},
xtick={10,100,1000,10000,100000,1000000},
xticklabels={
  \(\displaystyle {10^{1}}\),
  \(\displaystyle {10^{2}}\),
  \(\displaystyle {10^{3}}\),
  \(\displaystyle {10^{4}}\),
  \(\displaystyle {10^{5}}\),
  \(\displaystyle {10^{6}}\)
},
y grid style={white},
ylabel=\textcolor{dimgray85}{\large $\asmi$ [nats]},
ymajorgrids,
ymin=0.021, ymax=0.025448349136428,
ytick style={color=dimgray85},
]
\addplot [semithick, chocolate2267451, mark=*, mark size=3, mark options={solid}]
table {%
50000 0.0215479441488876
10000 0.0217599544766737
5000 0.0219538848063418
1000 0.0233456329460136
500 0.0252626155655927
};
\addplot [semithick, chocolate2267451, dashed]
table {%
50000 0.0215479441488876
10000 0.0215479441488876
5000 0.0215479441488876
1000 0.0215479441488876
500 0.0215479441488876
100 0.0215479441488876
};
\pgfresetboundingbox
\end{axis}
\end{tikzpicture}}
            \label{fig:layer_ablation1}
        \end{subfigure}
        
        \begin{subfigure}{\textwidth}
            \centering
            \scalebox{0.65}{
\begin{tikzpicture}

\definecolor{chocolate2267451}{RGB}{52,138,189}
\definecolor{dimgray85}{RGB}{85,85,85}
\definecolor{gainsboro229}{RGB}{229,229,229}

\begin{axis}[
width=9cm, height=3.5cm,
axis background/.style={fill=gainsboro229},
axis line style={white},
log basis x={10},
tick align=outside,
tick pos=left,
x grid style={white},
xlabel=\textcolor{dimgray85}{},
xmajorgrids,
xmin=397.164117362141, xmax=62946.2705897083,
xmode=log,
xtick style={color=dimgray85},
xtick={10,100,1000,10000,100000,1000000},
xticklabels={
  \(\displaystyle {10^{1}}\),
  \(\displaystyle {10^{2}}\),
  \(\displaystyle {10^{3}}\),
  \(\displaystyle {10^{4}}\),
  \(\displaystyle {10^{5}}\),
  \(\displaystyle {10^{6}}\)
},
y grid style={white},
ylabel=\textcolor{dimgray85}{\large $\msmi$ [nats]},
ymajorgrids,
ymin=0.137318582087755, ymax=0.36,
ytick style={color=dimgray85}
]
\addplot [semithick, chocolate2267451, mark=*, mark size=3, mark options={solid}]
table {%
50000 0.34142941236496
10000 0.342191725969315
5000 0.340315163135529
1000 0.274316489696503
500 0.147074446082115
};
\addplot [semithick, chocolate2267451, dashed]
table {%
50000 0.34142941236496
10000 0.34142941236496
5000 0.34142941236496
1000 0.34142941236496
500 0.34142941236496
100 0.34142941236496
};
\end{axis}

\end{tikzpicture}}
            \label{fig:asi_msi_ne}
        \end{subfigure}
    \end{subfigure}%
    \hfill
    \begin{subfigure}{0.45\textwidth}
        \centering
        \scalebox{0.65}{
\begin{tikzpicture}

\definecolor{chocolate2267451}{RGB}{226,74,51}
\definecolor{dimgray85}{RGB}{85,85,85}
\definecolor{gainsboro229}{RGB}{229,229,229}
\definecolor{steelblue52138189}{RGB}{52,138,189}

\begin{axis}[
width=9cm, height=3.5cm,
legend style={at={(1,0)}, anchor=south east , fill opacity=0.65, draw opacity=1, text opacity=1, draw=none},
axis background/.style={fill=gainsboro229},
axis line style={white},
log basis x={10},
log basis y={10},
tick align=outside,
tick pos=left,
x grid style={white},
xlabel=\textcolor{dimgray85}{\large  $n$},
xmajorgrids,
xmin=397.164117362141, xmax=62946.2705897083,
xmode=log,
xtick style={color=dimgray85},
xtick={10,100,1000,10000,100000,1000000},
xticklabels={
  \(\displaystyle {10^{1}}\),
  \(\displaystyle {10^{2}}\),
  \(\displaystyle {10^{3}}\),
  \(\displaystyle {10^{4}}\),
  \(\displaystyle {10^{5}}\),
  \(\displaystyle {10^{6}}\)
},
y grid style={white},
ylabel=\textcolor{dimgray85}{\large Time [sec.]},
ymajorgrids,
ymin=0.0261518138057445, ymax=502.54122536662,
ymode=log,
ytick style={color=dimgray85},
ytick={0.001,0.01,0.1,1,10,100,1000,10000},
yticklabels={
  \(\displaystyle {10^{-3}}\),
  \(\displaystyle {10^{-2}}\),
  \(\displaystyle {10^{-1}}\),
  \(\displaystyle {10^{0}}\),
  \(\displaystyle {10^{1}}\),
  \(\displaystyle {10^{2}}\),
  \(\displaystyle {10^{3}}\),
  \(\displaystyle {10^{4}}\)
}
]
\addplot [semithick, chocolate2267451, mark=*, mark size=3, mark options={solid}]
table {%
50000 320.96629357338
10000 65.7234039306641
5000 30.6288013458252
1000 6.15503668785095
500 3.27333235740662
};
\addplot [semithick, steelblue52138189, mark=*, mark size=3, mark options={solid}]
table {%
50000 3.62920713424683
10000 0.765299081802368
5000 0.369369268417358
1000 0.0784246921539306
500 0.0409462451934814
};
\addlegendentry{$\asmi$}
\addlegendentry{$\msmi$}
\end{axis}

\end{tikzpicture}}
        \label{fig:epoch_time_ne}
    \end{subfigure}
    \caption{Neural estimation performance with $\rho=0.5$. Convergence vs. $n$ in upper figures and average epoch time vs. $n$ in lower figure.}
    \vspace{-0.3in}
    \label{fig:NE}
\end{wrapfigure}
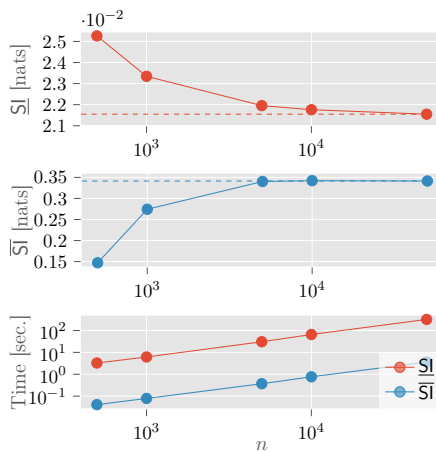


We compare the performance of neural estimation methods for mSMI and aSMI on a synthetic dataset of correlated Gaussians. Namely, let $X,Z\sim \cN(0,1)$ be i.i.d. and set $Y=\rho X+\sqrt{1-\rho^2} Z$, for $\rho\in(0,1)$. The goal is to estimate mSMI and aSMI with $k$-dimensional projections between $(X,Y)$. 
We train our mSMI neural estimator and the aSMI neural estimator from \cite[Section~4.2]{goldfeld2022k} based on $n$ i.i.d. samples, and compare their performance as a function of $n$. Both average and max-sliced algorithms converge at similar rates; however, aSMI has significantly higher time complexity due to the need to train multiple neural estimators (one for each projection direction). This is shown in Figure \ref{fig:NE}, where we compare the average epoch time for each algorithm against the dataset size. Implementation details are given in Supplement \ref{appendix:imeplementation}.

\subsection{Independence Testing}

In this experiment, we compare mSMI and aSMI for independence testing. We follow the setting from \cite[Section 5]{goldfeld2022k}, generating $d$-dimensional samples correlated in a latent $d'$-dimensional subspace and estimating the information measure to determine dependence. We estimate the aSMI with the method from \cite{goldfeld2022k}, using $m=1000$ Monte Carlo samples and the Kozachenko-Leonenko estimator for the mutual information between the projected variables \cite{kraskov2004estimating}. We then compute AUC-ROC over 100 trials, considering various ambient and projected dimensions. For mSMI, as we cannot differentiate through the Kozachenko-Leonenko estimator, we resort to gradient-free methods. Specifically, we employ the LIPO algorithm from \cite{malherbe2017global} with a stopping criterion of 1000 samples. This choice is motivated by the Lipschitzness of $(\rA,\rB)\mapsto \sI(\rAt X;\rBt Y)$ w.r.t. the Frobenius norm on $\sti(k,d_x)\times\sti(k,d_y)$ (cf. \cite[Lemma~2]{goldfeld2022k}).
Figure \ref{fig:independence} shows that when $k > 2$, mSMI better captures independence than aSMI, particularly in the lower sample regime. We hypothesize that this may be due to the fact that the shared signal lies in a low-dimensional subspace, which mSMI can isolate and perhaps better exploit than aSMI, which averages over all subspaces. When $k$ is much smaller than the shared signal dimension $d'$, mSMI will fail to capture all the information and may underperform aSMI which takes all slices into account. 
Results are averaged over 10 seeds.
Further implementation details are in Supplement~\ref{appendix:imeplementation}.

\begin{figure}[t]
    \centering
    \hspace{-1cm}
    \begin{subfigure}[b]{0.13\textwidth}
        \centering
        \scalebox{0.9}{
\begin{tikzpicture}

\definecolor{chocolate2267451}{RGB}{226,74,51}
\definecolor{dimgray85}{RGB}{85,85,85}
\definecolor{gainsboro229}{RGB}{229,229,229}
\definecolor{steelblue52138189}{RGB}{52,138,189}

\begin{axis}[
xlabel={$n$}, 
    xlabel style={at={(axis description cs:2.55,0)},anchor=north}, 
xticklabel style={font=\scriptsize},
yticklabel style={font=\footnotesize},
xscale=0.3, yscale=0.5,
axis background/.style={fill=gainsboro229},
axis line style={white},
log basis x={10},
tick align=outside,
tick pos=left,
x grid style={white},
xmajorgrids,
xmin=7.94328234724282, xmax=1258.92541179417,
xmode=log,
xtick style={color=dimgray85},
xtick={0.1,1,10,100,1000,10000,100000},
xticklabels={
  \(\displaystyle {10^{-1}}\),
  \(\displaystyle {10^{0}}\),
  \(\displaystyle {10^{1}}\),
  \(\displaystyle {10^{2}}\),
  \(\displaystyle {10^{3}}\),
  \(\displaystyle {10^{4}}\),
  \(\displaystyle {10^{5}}\)
},
y grid style={white},
ymajorgrids,
ymin=0.475, ymax=1.025,
ytick style={color=dimgray85}
]
\addplot [thick, steelblue52138189]
table {
    10 0.54
    31 0.55
    100 0.56
    317 0.59
    1000 0.99
};
\addplot [thick, steelblue52138189, dashed]
table {
    10 0.6799666666666667
    31 0.8417
    100 0.8975666666666667
    317 0.9737666666666667
    1000 0.9967
};
\node[fill=white, fill opacity=0.5, text opacity=1, anchor=north west, inner sep = 3pt] at (rel axis cs: 0.05, 0.95) {$k=1$};

\end{axis}

\end{tikzpicture}}
    \end{subfigure}%
    \hspace{0.8cm}
    \begin{subfigure}[b]{0.13\textwidth}
        \centering
        \scalebox{0.9}{\begin{tikzpicture}

\definecolor{chocolate2267451}{RGB}{226,74,51}
\definecolor{dimgray85}{RGB}{85,85,85}
\definecolor{gainsboro229}{RGB}{229,229,229}
\definecolor{steelblue52138189}{RGB}{52,138,189}

\begin{axis}[
    xlabel={$n$}, 
    xlabel style={at={(axis description cs:2.55,-0)},anchor=north}, 
    xticklabel style={font=\scriptsize},
    xscale=0.3, yscale=0.5,
    axis background/.style={fill=gainsboro229},
    axis line style={white},
    log basis x={10},
    tick align=outside,
    tick pos=left,
    x grid style={white},
    xmajorgrids,
    xmin=7.94328234724282, xmax=1258.92541179417,
    xmode=log,
    xtick style={color=dimgray85},
    xtick={0.1,1,10,100,1000,10000,100000},
    xticklabels={
      \(\displaystyle {10^{-1}}\),
      \(\displaystyle {10^{0}}\),
      \(\displaystyle {10^{1}}\),
      \(\displaystyle {10^{2}}\),
      \(\displaystyle {10^{3}}\),
      \(\displaystyle {10^{4}}\),
      \(\displaystyle {10^{5}}\)
    },
    yticklabels={},
    y grid style={white},
    ymajorgrids,
    ymin=0.475, ymax=1.025,
    ytick style={color=dimgray85}
]
\addplot [thick, steelblue52138189, dashed]
table {
    10 0.61
    31 0.95
    100 0.99
    317 1.00
    1000 1.00
};
\addplot [thick, steelblue52138189]
table {
    10 0.59
    31 0.77
    100 0.84
    317 0.96
    1000 1.00
};
\node[fill=white, fill opacity=0.5, text opacity=1, anchor=north west, inner sep = 3pt] at (rel axis cs: 0.05, 0.95) {$k=2$};
\end{axis}

\end{tikzpicture}}
    \end{subfigure}
    \hspace{0.3cm}
    \begin{subfigure}[b]{0.13\textwidth}
        \centering
        \scalebox{0.9}{
\begin{tikzpicture}

\definecolor{chocolate2267451}{RGB}{226,74,51}
\definecolor{dimgray85}{RGB}{85,85,85}
\definecolor{gainsboro229}{RGB}{229,229,229}
\definecolor{steelblue52138189}{RGB}{52,138,189}

\begin{axis}[
xlabel={$n$}, 
    xlabel style={at={(axis description cs:2.55,-0)},anchor=north}, 
xticklabel style={font=\scriptsize},
xscale=0.3, yscale=0.5,
axis background/.style={fill=gainsboro229},
axis line style={white},
log basis x={10},
tick align=outside,
tick pos=left,
x grid style={white},
xmajorgrids,
xmin=7.94328234724282, xmax=1258.92541179417,
xmode=log,
xtick style={color=dimgray85},
xtick={0.1,1,10,100,1000,10000,100000},
xticklabels={
  \(\displaystyle {10^{-1}}\),
  \(\displaystyle {10^{0}}\),
  \(\displaystyle {10^{1}}\),
  \(\displaystyle {10^{2}}\),
  \(\displaystyle {10^{3}}\),
  \(\displaystyle {10^{4}}\),
  \(\displaystyle {10^{5}}\)
},
yticklabels={},
y grid style={white},
ymajorgrids,
ymin=0.475, ymax=1.025,
ytick style={color=dimgray85},
]
\addplot [thick, steelblue52138189, dashed]
table {
    10 0.53
    31 0.64
    100 0.95
    317 1.00
    1000 1.00
};
\addplot [thick, steelblue52138189]
table {
    10 0.71
    31 0.78
    100 0.94
    317 1.00
    1000 1.00
};
\node[fill=white, fill opacity=0.5, text opacity=1, anchor=north west, inner sep = 3pt] at (rel axis cs: 0.05, 0.95) {$k=3$};

\end{axis}

\end{tikzpicture}}
    \end{subfigure}%
    \hspace{0.3cm}
    \begin{subfigure}[b]{0.13\textwidth}
        \centering
        \scalebox{0.9}{
\begin{tikzpicture}

\definecolor{chocolate2267451}{RGB}{226,74,51}
\definecolor{dimgray85}{RGB}{85,85,85}
\definecolor{gainsboro229}{RGB}{229,229,229}
\definecolor{steelblue52138189}{RGB}{226,74,51}

\begin{axis}[
xlabel={$n$}, 
    xlabel style={at={(axis description cs:2.55,-0)},anchor=north}, 
xticklabel style={font=\scriptsize},
xscale=0.3, yscale=0.5,
axis background/.style={fill=gainsboro229},
axis line style={white},
log basis x={10},
tick align=outside,
tick pos=left,
x grid style={white},
xmajorgrids,
xmin=7.94328234724282, xmax=1258.92541179417,
xmode=log,
xtick style={color=dimgray85},
xtick={0.1,1,10,100,1000,10000,100000},
xticklabels={
  \(\displaystyle {10^{-1}}\),
  \(\displaystyle {10^{0}}\),
  \(\displaystyle {10^{1}}\),
  \(\displaystyle {10^{2}}\),
  \(\displaystyle {10^{3}}\),
  \(\displaystyle {10^{4}}\),
  \(\displaystyle {10^{5}}\)
},
yticklabels={},
y grid style={white},
ymajorgrids,
ymin=0.475, ymax=1.025,
ytick style={color=dimgray85}
]
\addplot [thick, steelblue52138189,dashed]
table {%
10 0.5
32 0.5
100 0.93
317 1
1000 1
};
\addplot [thick, steelblue52138189]
table {%
10 0.5932
32 0.7592
100 0.8876
317 0.9876
1000 1
};
\node[fill=white, fill opacity=0.5, text opacity=1, anchor=north west, inner sep = 3pt] at (rel axis cs: 0.05, 0.95) {$k=3$};

\end{axis}

\end{tikzpicture}}
    \end{subfigure}%
    \hspace{0.3cm}
    \begin{subfigure}[b]{0.13\textwidth}
        \centering
        \scalebox{0.9}{
\begin{tikzpicture}

\definecolor{chocolate2267451}{RGB}{226,74,51}
\definecolor{dimgray85}{RGB}{85,85,85}
\definecolor{gainsboro229}{RGB}{229,229,229}
\definecolor{steelblue52138189}{RGB}{226,74,51}

\begin{axis}[
xlabel={$n$}, 
    xlabel style={at={(axis description cs:2.55,-0)},anchor=north}, 
xticklabel style={font=\scriptsize},
xscale=0.3, yscale=0.5,
axis background/.style={fill=gainsboro229},
axis line style={white},
log basis x={10},
tick align=outside,
tick pos=left,
x grid style={white},
xmajorgrids,
xmin=7.94328234724282, xmax=1258.92541179417,
xmode=log,
xtick style={color=dimgray85},
xtick={0.1,1,10,100,1000,10000,100000},
xticklabels={
  \(\displaystyle {10^{-1}}\),
  \(\displaystyle {10^{0}}\),
  \(\displaystyle {10^{1}}\),
  \(\displaystyle {10^{2}}\),
  \(\displaystyle {10^{3}}\),
  \(\displaystyle {10^{4}}\),
  \(\displaystyle {10^{5}}\)
},
yticklabels={},
y grid style={white},
ymajorgrids,
ymin=0.475, ymax=1.025,
ytick style={color=dimgray85}
]
\addplot [thick, steelblue52138189,dashed]
table {%
10 0.5
32 0.5
100 0.5
317 0.99
1000 1
};
\addplot [thick, steelblue52138189]
table {%
10 0.7772
32 0.8348
100 0.96
317 1
1000 1
};
\node[fill=white, fill opacity=0.5, text opacity=1, anchor=north west, inner sep = 3pt] at (rel axis cs: 0.05, 0.95) {$k=4$};

\end{axis}

\end{tikzpicture}}
    \end{subfigure}%
    \hspace{0.3cm}
    \begin{subfigure}[b]{0.13\textwidth}
        \centering
        \scalebox{0.9}{
\begin{tikzpicture}

\definecolor{chocolate2267451}{RGB}{226,74,51}
\definecolor{dimgray85}{RGB}{85,85,85}
\definecolor{gainsboro229}{RGB}{229,229,229}
\definecolor{steelblue52138189}{RGB}{226,74,51}

\begin{axis}[
xlabel={$n$}, 
    xlabel style={at={(axis description cs:2.55,-0)},anchor=north}, 
xticklabel style={font=\scriptsize},
xscale=0.3, yscale=0.5,
axis background/.style={fill=gainsboro229},
axis line style={white},
log basis x={10},
tick align=outside,
tick pos=left,
x grid style={white},
xmajorgrids,
xmin=7.94328234724282, xmax=1258.92541179417,
xmode=log,
xtick style={color=dimgray85},
xtick={0.1,1,10,100,1000,10000,100000},
xticklabels={
  \(\displaystyle {10^{-1}}\),
  \(\displaystyle {10^{0}}\),
  \(\displaystyle {10^{1}}\),
  \(\displaystyle {10^{2}}\),
  \(\displaystyle {10^{3}}\),
  \(\displaystyle {10^{4}}\),
  \(\displaystyle {10^{5}}\)
},
yticklabels={},
y grid style={white},
ymajorgrids,
ymin=0.475, ymax=1.025,
ytick style={color=dimgray85}
]
\addplot [thick, steelblue52138189,dashed]
table {%
10 0.5
32 0.5
100 0.5
317 0.5
1000 1
};
\addplot [thick, steelblue52138189]
table {%
10 0.7044
32 0.9032
100 0.9988
317 1
1000 1
};
\node[fill=white, fill opacity=0.5, text opacity=1, anchor=north west, inner sep = 3pt] at (rel axis cs: 0.05, 0.95) {$k=5$};

\end{axis}

\end{tikzpicture}}
    \end{subfigure}%
    \caption{ROC-AUC comparison. Dashed and solid lines show results for aSMI \cite{goldfeld2022k} and mSMI (ours), respectively. Blue plots correspond to $(d,d')=(10,4)$, while red correspond to $(d,d')=(20,6)$.}
    \label{fig:independence}
    \vspace{-3mm}
\end{figure}
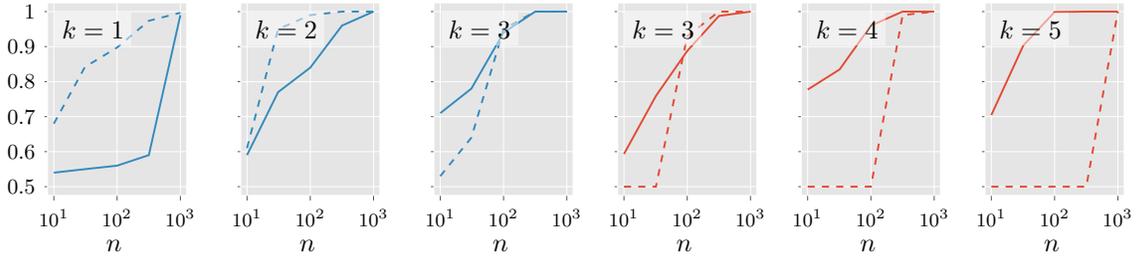

\subsection{Multi-View Representation Learning}\label{sec:exp_representation}

\begin{wraptable}{r}{0.5\textwidth}
\vspace{-4mm}
\centering
\resizebox{\linewidth}{!}{
\begin{tabular}{|c|c|c|c|c|}
\toprule
\textbf{$k$} & \textbf{Linear CCA} & \textbf{Linear mSMI} & \textbf{MLP DCCA} & \textbf{MLP mSMI} \\
\midrule
1 & 0.261$\pm$0.03 & \textbf{0.274}$\pm$0.02 & 0.284$\pm$0.03 & \textbf{0.291}$\pm$0.02 \\
2 & 0.32$\pm$0.02 & \textbf{0.346}$\pm$0.02 & 0.314$\pm$0.03 & \textbf{0.417}$\pm$0.02 \\
4 & 0.42$\pm$0.01 & \textbf{0.478}$\pm$0.02 & 0.441$\pm$0.04 & \textbf{0.546}$\pm$0.01 \\
8 & 0.553$\pm$0.03 & \textbf{0.666}$\pm$0.01 & 0.645$\pm$0.02 & \textbf{0.665}$\pm$0.01 \\
12 & 0.614$\pm$0.02 & \textbf{0.751}$\pm$0.01 & 0.697$\pm$0.01 & \textbf{0.753}$\pm$0.01 \\
16 & 0.673$\pm$0.02 & \textbf{0.775}$\pm$0.01 & 0.730$\pm$0.02 & \textbf{0.779}$\pm$0.01 \\
20 & 0.704$\pm$0.007 & \textbf{0.79}$\pm$0.006 & 0.774$\pm$0.01 & \textbf{0.798}$\pm$0.01 \\
\bottomrule
\end{tabular}}
\caption{Downstream classification test accuracy based on MNIST representations learned by CCA and mSMI.}
\label{table:accuracy}
\end{wraptable}

We next explore mSMI as an information-theoretic generalization of CCA by examining its utility in multi-view representation learning---a popular CCA application. Without using class labels, we obtain mSMI-based $k$-dimensional representations of the top and bottom halves of MNIST images (considered as two separate views of the digit image). This is done by computing the $k$-dimensional mSMI between the views and using the maximizing projected variables as the representations. We compare to similarly obtained CCA-based representations, following the method of \cite{andrew2013deep}. Both linear and nonlinear (parameterized by an MLP neural network) slicing models are optimized with similar initialization and data pipelines but different loss functions. Performance is evaluated via downstream 10-class classification accuracy utilizing the learned top-half representations. Results are averaged over 10 seeds. As shown in Table \ref{table:accuracy}, mSMI outperforms CCA for learning meaningful representations. Interestingly, linear representations learned by mSMI outperform nonlinear representations from the CCA methodology, demonstrating the potency of mSMI. Full implementation details and additional results are given in Supplements \ref{appendix:imeplementation} and \ref{appendix:reprensetaion}, 
respectively.

We note that aSMI is not considered for this experiment since it does not provide a concrete latent space representation (due to it being an averaged quantity). Moreover, if one were to attempt to maximize aSMI as an objective to derive such a concrete representation, this would simply lead back to  computing mSMI;  cf. Remark \ref{rem:msmi_vs_asmi}.

\subsection{Learning Fair Representations}

\begin{wraptable}{R}{0.7\textwidth}
\vspace{-.15in}
\caption{Learning a fair representation of the US Census Demographic dataset, following the setup of \cite{chen2023scalable}. Results are shown as the median over 10 runs with random data splits. The fairest result is $k=6$. }
\label{table:Census}
\centering
\resizebox{\linewidth}{!}{
\begin{tabular}{|c|c|c|c|c|c|c|c|c|c|}
\hline
& \textbf{N/A} & \textbf{Slice \cite{chen2023scalable}} & \multicolumn{7}{c|}{\textbf{mSMI (ours)}}  \\
\hline
\cellcolor{gray}& \cellcolor{gray}& \cellcolor{gray} & $k=1$ & $k=2$ & $k=3$ & $k=4$ & $k=5$ & $\mathbf{k=6}$ & $k=7$\\
\hline
$\rho_\mathsf{HGR}(Z,Y) \uparrow$ & 0.949 & 0.967 & 0.955 & 0.958 & 0.952 & 0.942 & 0.940 & 0.957 & 0.933 \\
\hline
$\rho_\mathsf{HGR}(Z,A) \downarrow$ & 0.795 & 0.116 & 0.220 & 0.099 & 0.067 & 0.048 & 0.029 & \textbf{0.026} & 0.047 \\
\hline
\end{tabular}}
\end{wraptable}

Another common application of dependence measures is learning fair representations of data. We seek a data transformation $Z = f(X)$ that is useful for predicting some outcome or label $Y$, while being statistically independent of some sensitive attribute $A$ (e.g., gender, race, or religion of the subject). In other words, a fair representation is one that is not affected by the subjects' protected attributes so that downstream predictions are not biased against protected groups, even if the training data may have been biased. Following the setup of \cite{chen2023scalable}, we measure utility and fairness using the HGR maximal correlation $\rho_\mathsf{HGR}(\cdot,\cdot) = \sup_{h,g} \rho\big(h(\cdot),g(\cdot)\big)$, seeking large $\rho_\mathsf{HGR}(Z,Y)$ and small $\rho_\mathsf{HGR}(Z,A)$ where $h$ and $g$ are parameterized by NNs. As solving this minimax problem directly is difficult in practice, following \cite{chen2023scalable} we learn $Z$ by optimizing the bottleneck equation
$\rho_\mathsf{HGR}(Z,Y) - \beta \msmi_k (Z,A)$,
where we use a neural estimator for the mSMI and $\beta$, $k$ are hyperparameters. 

Table \ref{table:Census} shows results on the US Census Demographic dataset extracted from the 2015 American Community Survey, which has 37 features collected over 74,000 census tracts. Here $Y$ is the fraction of children below the poverty line in a tract, and $A$ is the fraction of women in the tract. Following the same experimental setup as \cite{chen2023scalable}, the learned $Z$ is 80-dimensional. As \cite{chen2023scalable} showed that their ``Slice'' approach significantly outperformed all other baselines on this experiments under a computational constraint\footnote{Runtime per iteration not to exceed the runtime of Slice per iteration. We used an NVIDIA V100 GPU.}, we apply the same computational constraint to our approach and compare only to Slice and to the ``N/A'' fairness-agnostic model trained on the bottleneck objective with $\beta = 0$. 
Note that for $k > 1$, mSMI learns a more fair representation $Z$ (lower $\rho_\mathsf{HGR}(Z,A)$) than Slice, while retaining a utility $\rho_\mathsf{HGR}(Z,Y)$ on par with the fairness agnostic N/A model.
We emphasize that due to the reasons outlined in Section \ref{sec:exp_representation}, aSMI is not suitable for the considered task and is thus not included in the comparison.
Results on the Adult dataset are shown in Supplement \ref{app:adult}.

\subsection{Max-Sliced InfoGAN}
We present an application of max-slicing to generative modeling under the InfoGAN framework \cite{chen2016infogan}. The InfoGAN learns a disentangled latent space by maximizing the mutual information between a latent code variable and the generated data. We revisit this architecture but replace the classical mutual information regularizer in the InfoGAN objective with mSMI. Our max-sliced InfoGAN is tested on the MNIST and Fashion-MNIST datasets. Figure \ref{fig:infogan} presents the max-sliced InfoGAN performance for several projection dimensions. We consider $3$ latent codes $(C_1,C_2,C_3)$, which automatically learn to encode different features of the generated data. We vary the values of $C_1$, which is a $10$-state discrete variable, along the column (and consider random values of $(C_2,C_3)$ along the rows). Evidently, $C_1$ has successfully disentangled the 10 class labels and the quality of generated samples is on par with past implementations \cite{chen2016infogan,goldfeld2022k}. It is noteworthy that since mSMI relies on low-dimensional projections, the resulting InfoGAN uses a reduced number of parameters (at the negligible expense of optimizing over the linear projections). Additional implementation details are found in Supplement \ref{appendix:imeplementation}.


\begin{figure}[b]
    \centering
    \begin{subfigure}[b]{0.3\textwidth}
        \centering
        \includegraphics[width=\textwidth]{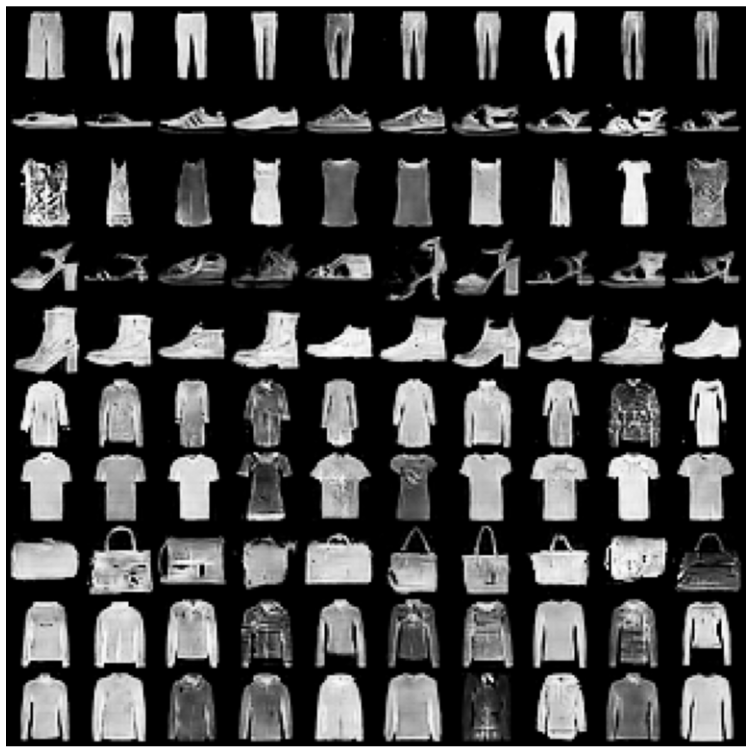}
        \caption{FashionMNIST}
    \end{subfigure}%
    \hspace{0.23cm}
    \begin{subfigure}[b]{0.301\textwidth}
        \centering
        \includegraphics[width=\textwidth]{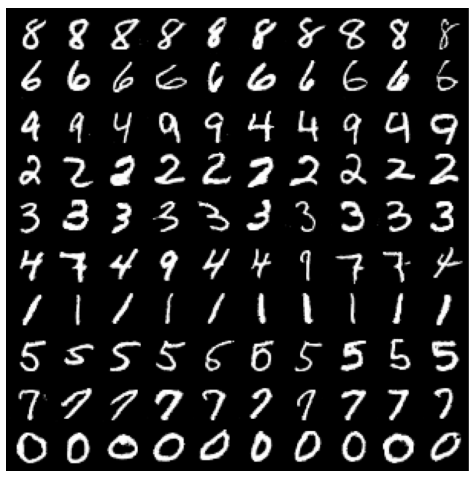}
        \caption{MNIST, $k=5$}
    \end{subfigure}
    \hspace{0.1cm}
    \begin{subfigure}[b]{0.3063\textwidth}
        \centering
        \includegraphics[width=\textwidth]{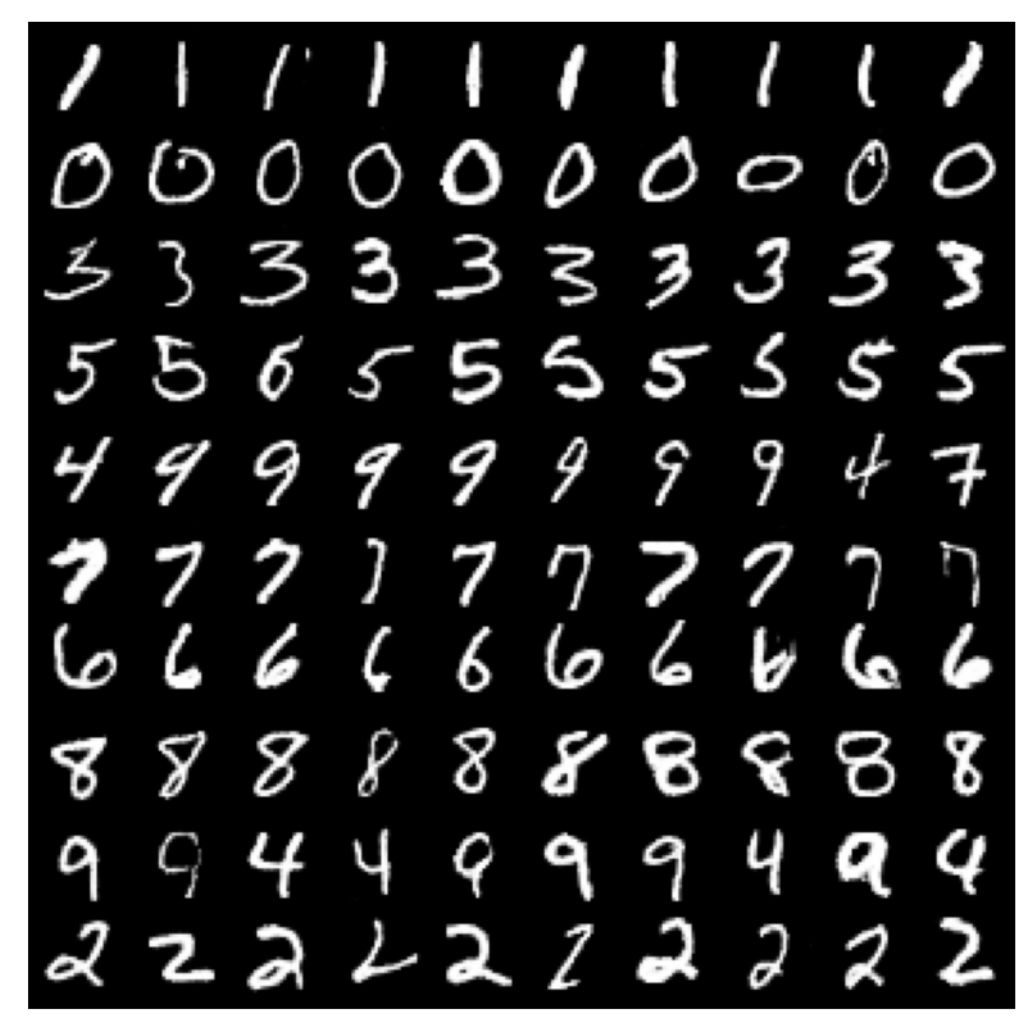}
        \caption{MNIST, $k=10$}
    \end{subfigure}%
    \caption{MNIST images generated via the max-sliced InfoGAN.}
    \label{fig:infogan}
\end{figure}

\vspace{-1mm}
\section{Conclusion}
\vspace{-1mm}

This paper proposed mSMI, an information theoretic generalization of CCA.
mSMI captures the full dependence structure between two high dimensional random variables, while only requiring an optimized linear projection of the data.
We showed that mSMI inherits important properties of Shannon's mutual information and that when the random variables are Gaussian, the mSMI optimal solutions coincide with classic $k$-dimensional CCA.
Moving beyond Gaussian distributions, we present a neural estimator of mSMI and establish non-asymptotic error bounds.

Through several experiments we demonstrate the utility of mSMI for tasks such as independence testing, multi-view representation learning, algorithmic fairness and generative modeling, showing it outperforms popular methodologies.
Possible future directions include an investigation of an operational meaning of mSMI, either in information theoretic or physical terms, extension of the proposed formal guarantees to the nonlinear setting, and the extension of the neural estimation convergence guarantees to deeper networks. Additionally, mSMI can provide a mathematical foundation to mutual information-based representation learning, a popular area of self-supervised learning \cite{balestriero2023cookbook,shwartz2023compress}.

In addition to the above, we plan to develop a rigorous theory for the choice of $k$, which is currently devised empirically and is treated as a hyperparameter.
When the support of the distributions lies in some $d'<d$ dimensional subspace, the choice of $k=d'$ is sufficient to recover the classical mutual information, and therefore it characterizes the full dependence structure. Extrapolating from this point, we conjecture that the optimal value of $k$ is related to the intrinsic dimension of the data distribution, even when it is not strictly supported on a low-dimensional subset.


\clearpage
\bibliographystyle{unsrt}
\bibliography{main_neurips.bib}

\begin{thebibliography}{10}

\bibitem{siegel1957nonparametric}
Sidney Siegel.
\newblock Nonparametric statistics.
\newblock {\em The American Statistician}, 11(3):13--19, 1957.

\bibitem{haugh1976checking}
Larry~D Haugh.
\newblock Checking the independence of two covariance-stationary time series: a univariate residual cross-correlation approach.
\newblock {\em Journal of the American Statistical Association}, 71(354):378--385, 1976.

\bibitem{berrett2019nonparametric}
Thomas~B Berrett and Richard~J Samworth.
\newblock Nonparametric independence testing via mutual information.
\newblock {\em Biometrika}, 106(3):547--566, 2019.

\bibitem{butte1999mutual}
Atul~J Butte and Isaac~S Kohane.
\newblock Mutual information relevance networks: functional genomic clustering using pairwise entropy measurements.
\newblock In {\em Biocomputing 2000}, pages 418--429. World Scientific, 1999.

\bibitem{kraskov2005hierarchical}
Alexander Kraskov, Harald St{\"o}gbauer, Ralph~G Andrzejak, and Peter Grassberger.
\newblock Hierarchical clustering using mutual information.
\newblock {\em Europhysics Letters}, 70(2):278, 2005.

\bibitem{oord2018representation}
Aaron van~den Oord, Yazhe Li, and Oriol Vinyals.
\newblock Representation learning with contrastive predictive coding.
\newblock {\em arXiv preprint arXiv:1807.03748}, 2018.

\bibitem{wang2015deep}
Weiran Wang, Raman Arora, Karen Livescu, and Jeff Bilmes.
\newblock On deep multi-view representation learning.
\newblock In {\em International conference on machine learning}, pages 1083--1092. PMLR, 2015.

\bibitem{chen2020simple}
Ting Chen, Simon Kornblith, Mohammad Norouzi, and Geoffrey Hinton.
\newblock A simple framework for contrastive learning of visual representations.
\newblock In {\em International conference on machine learning}, pages 1597--1607. PMLR, 2020.

\bibitem{zbontar2021barlow}
Jure Zbontar, Li~Jing, Ishan Misra, Yann LeCun, and St{\'e}phane Deny.
\newblock Barlow twins: Self-supervised learning via redundancy reduction.
\newblock In {\em International Conference on Machine Learning}, pages 12310--12320. PMLR, 2021.

\bibitem{balestriero2023cookbook}
Randall Balestriero, Mark Ibrahim, Vlad Sobal, Ari Morcos, Shashank Shekhar, Tom Goldstein, Florian Bordes, Adrien Bardes, Gregoire Mialon, Yuandong Tian, et~al.
\newblock A cookbook of self-supervised learning.
\newblock {\em arXiv preprint arXiv:2304.12210}, 2023.

\bibitem{pearson1895vii}
Karl Pearson.
\newblock Vii. note on regression and inheritance in the case of two parents.
\newblock {\em proceedings of the royal society of London}, 58(347-352):240--242, 1895.

\bibitem{CovThom06}
Thomas~M Cover and A~Joy Thomas.
\newblock {\em Elements of Information Theory}.
\newblock Wiley, New-York, 2nd edition, 2006.

\bibitem{el2011network}
Abbas El~Gamal and Young-Han Kim.
\newblock {\em Network information theory}.
\newblock Cambridge university press, 2011.

\bibitem{Paninski2003}
Liam Paninski.
\newblock Estimation of entropy and mutual information.
\newblock {\em Neural Computation}, 15:1191--1253, June 2003.

\bibitem{hotelling1933analysis}
Harold Hotelling.
\newblock Analysis of a complex of statistical variables into principal components.
\newblock {\em Journal of educational psychology}, 24(6):417, 1933.

\bibitem{hirschfeld1935connection}
Hermann~O Hirschfeld.
\newblock A connection between correlation and contingency.
\newblock {\em Mathematical Proceedings of the Cambridge Philosophical Society}, 31(4):520--524, 1935.

\bibitem{gebelein1941statistische}
Hans Gebelein.
\newblock Das statistische problem der korrelation als variations-und eigenwertproblem und sein zusammenhang mit der ausgleichsrechnung.
\newblock {\em ZAMM-Journal of Applied Mathematics and Mechanics/Zeitschrift für Angewandte Mathematik und Mechanik}, 21(6):364--379, 1941.

\bibitem{renyi1959measures}
Alfr{\'e}d R{\'e}nyi.
\newblock On measures of dependence.
\newblock {\em Acta Mathematica Academiae Scientiarum Hungarica}, 10(3-4):441--451, 1959.

\bibitem{akaho2006kernel}
Shotaro Akaho.
\newblock A kernel method for canonical correlation analysis.
\newblock {\em arXiv preprint cs/0609071}, 2006.

\bibitem{hardoon2004canonical}
David~R Hardoon, Sandor Szedmak, and John Shawe-Taylor.
\newblock Canonical correlation analysis: An overview with application to learning methods.
\newblock {\em Neural computation}, 16(12):2639--2664, 2004.

\bibitem{andrew2013deep}
Galen Andrew, Raman Arora, Jeff Bilmes, and Karen Livescu.
\newblock Deep canonical correlation analysis.
\newblock In {\em International conference on machine learning}, pages 1247--1255. PMLR, 2013.

\bibitem{bach2005probabilistic}
Francis~R Bach and Michael~I Jordan.
\newblock A probabilistic interpretation of canonical correlation analysis.
\newblock {\em Technical Report}, 2005.

\bibitem{kim2006learning}
Tae-Kyun Kim, Josef Kittler, and Roberto Cipolla.
\newblock Learning discriminative canonical correlations for object recognition with image sets.
\newblock In {\em Computer Vision--ECCV 2006: 9th European Conference on Computer Vision, Graz, Austria, May 7-13, 2006, Proceedings, Part III 9}, pages 251--262. Springer, 2006.

\bibitem{parkhomenko2009sparse}
Elena Parkhomenko, David Tritchler, and Joseph Beyene.
\newblock Sparse canonical correlation analysis with application to genomic data integration.
\newblock {\em Statistical applications in genetics and molecular biology}, 8(1), 2009.

\bibitem{painsky2020nonlinear}
Amichai Painsky, Meir Feder, and Naftali Tishby.
\newblock Nonlinear canonical correlation analysis: A compressed representation approach.
\newblock {\em Entropy}, 22(2):208, 2020.

\bibitem{goldfeld2021sliced}
Ziv Goldfeld and Kristjan Greenewald.
\newblock Sliced mutual information: A scalable measure of statistical dependence.
\newblock {\em Advances in Neural Information Processing Systems}, 34:17567--17578, 2021.

\bibitem{goldfeld2022k}
Ziv Goldfeld, Kristjan Greenewald, Theshani Nuradha, and Galen Reeves.
\newblock k-sliced mutual information: A quantitative study of scalability with dimension.
\newblock {\em Advances in Neural Information Processing Systems}, 35:15982--15995, 2022.

\bibitem{mardia1979multivariate}
Kantilal~Varichand Mardia, John~T Kent, and John~M Bibby.
\newblock Multivariate analysis.
\newblock {\em Probability and mathematical statistics}, 1979.

\bibitem{donsker1983asymptotic}
Monroe~D Donsker and SR~Srinivasa Varadhan.
\newblock Asymptotic evaluation of certain {M}arkov process expectations for large time. iv.
\newblock {\em Communications on Pure and Applied Mathematics}, 36(2):183--212, 1983.

\bibitem{belghazi2018mutual}
Mohamed~Ishmael Belghazi, Aristide Baratin, Sai Rajeshwar, Sherjil Ozair, Yoshua Bengio, Aaron Courville, and Devon Hjelm.
\newblock Mutual information neural estimation.
\newblock In {\em International Conference on Machine Learning}, pages 531--540. PMLR, 2018.

\bibitem{poole2018variational}
Ben Poole, Sherjil Ozair, A{\"a}ron van~den Oord, Alexander~A Alemi, and George Tucker.
\newblock On variational lower bounds of mutual information.
\newblock In {\em NeurIPS Workshop on Bayesian Deep Learning}, 2018.

\bibitem{chan2019neural}
Chung Chan, Ali Al-Bashabsheh, Hing~Pang Huang, Michael Lim, Da~Sun~Handason Tam, and Chao Zhao.
\newblock Neural entropic estimation: A faster path to mutual information estimation.
\newblock {\em arXiv preprint arXiv:1905.12957}, 2019.

\bibitem{song2019understanding}
Jiaming Song and Stefano Ermon.
\newblock Understanding the limitations of variational mutual information estimators.
\newblock {\em Proceedings of the International Conference on Learning Representations (ICLR)}, 2020.

\bibitem{zhang2019itene}
Jingjing Zhang, Osvaldo Simeone, Zoran Cvetkovic, Eugenio Abela, and Mark Richardson.
\newblock Itene: Intrinsic transfer entropy neural estimator.
\newblock {\em arXiv preprint arXiv:1912.07277}, 2019.

\bibitem{mukherjee2020ccmi}
Sudipto Mukherjee, Himanshu Asnani, and Sreeram Kannan.
\newblock Ccmi: Classifier based conditional mutual information estimation.
\newblock In {\em Uncertainty in artificial intelligence}, pages 1083--1093. PMLR, 2020.

\bibitem{tsur2023neural}
Dor Tsur, Ziv Aharoni, Ziv Goldfeld, and Haim Permuter.
\newblock Neural estimation and optimization of directed information over continuous spaces.
\newblock {\em IEEE Transactions on Information Theory}, 2023.

\bibitem{guo2022tight}
Qing Guo, Junya Chen, Dong Wang, Yuewei Yang, Xinwei Deng, Jing Huang, Larry Carin, Fan Li, and Chenyang Tao.
\newblock Tight mutual information estimation with contrastive fenchel-legendre optimization.
\newblock {\em Advances in Neural Information Processing Systems}, 35:28319--28334, 2022.

\bibitem{sreekumar2021non}
Sreejith Sreekumar, Zhengxin Zhang, and Ziv Goldfeld.
\newblock Non-asymptotic performance guarantees for neural estimation of f-divergences.
\newblock In {\em International Conference on Artificial Intelligence and Statistics}, pages 3322--3330. PMLR, 2021.

\bibitem{sreekumar2022neural}
Sreejith Sreekumar and Ziv Goldfeld.
\newblock Neural estimation of statistical divergences.
\newblock {\em Journal of Machine Learning Research}, 23(126):1--75, 2022.

\bibitem{chen2023scalable}
Yanzhi Chen, Weihao Sun, Yingzhen Li, and Adrian Weller.
\newblock Scalable infomin learning.
\newblock {\em Advances in Neural Information Processing Systems}, 35:2226--2239, 2022.

\bibitem{rabin2012wasserstein}
Julien Rabin, Gabriel Peyr{\'e}, Julie Delon, and Marc Bernot.
\newblock Wasserstein barycenter and its application to texture mixing.
\newblock In {\em Scale Space and Variational Methods in Computer Vision: Third International Conference, SSVM 2011, Ein-Gedi, Israel, May 29--June 2, 2011, Revised Selected Papers 3}, pages 435--446. Springer, 2012.

\bibitem{deshpande2019max}
Ishan Deshpande, Yuan-Ting Hu, Ruoyu Sun, Ayis Pyrros, Nasir Siddiqui, Sanmi Koyejo, Zhizhen Zhao, David Forsyth, and Alexander~G Schwing.
\newblock Max-sliced wasserstein distance and its use for gans.
\newblock In {\em Proceedings of the IEEE/CVF Conference on Computer Vision and Pattern Recognition}, pages 10648--10656, 2019.

\bibitem{kolouri2019generalized}
Soheil Kolouri, Kimia Nadjahi, Umut Simsekli, Roland Badeau, and Gustavo Rohde.
\newblock Generalized sliced wasserstein distances.
\newblock {\em Advances in neural information processing systems}, 32, 2019.

\bibitem{lin2020projection}
Tianyi Lin, Chenyou Fan, Nhat Ho, Marco Cuturi, and Michael Jordan.
\newblock Projection robust wasserstein distance and riemannian optimization.
\newblock {\em Advances in neural information processing systems}, 33:9383--9397, 2020.

\bibitem{huang2021riemannian}
Minhui Huang, Shiqian Ma, and Lifeng Lai.
\newblock A riemannian block coordinate descent method for computing the projection robust wasserstein distance.
\newblock In {\em International Conference on Machine Learning}, pages 4446--4455. PMLR, 2021.

\bibitem{lin2021projection}
Tianyi Lin, Zeyu Zheng, Elynn Chen, Marco Cuturi, and Michael~I Jordan.
\newblock On projection robust optimal transport: Sample complexity and model misspecification.
\newblock In {\em International Conference on Artificial Intelligence and Statistics}, pages 262--270. PMLR, 2021.

\bibitem{nietert2022statistical}
Sloan Nietert, Ziv Goldfeld, Ritwik Sadhu, and Kengo Kato.
\newblock Statistical, robustness, and computational guarantees for sliced wasserstein distances.
\newblock {\em Advances in Neural Information Processing Systems}, 35:28179--28193, 2022.

\bibitem{rao1979separation}
C~Radhakrishna Rao.
\newblock Separation theorems for singular values of matrices and their applications in multivariate analysis.
\newblock {\em Journal of Multivariate Analysis}, 9(3):362--377, 1979.

\bibitem{pearson1901liii}
Karl Pearson.
\newblock Liii. on lines and planes of closest fit to systems of points in space.
\newblock {\em The London, Edinburgh, and Dublin philosophical magazine and journal of science}, 2(11):559--572, 1901.

\bibitem{czarnecki2015maximum}
Wojciech~Marian Czarnecki, Rafal Jozefowicz, and Jacek Tabor.
\newblock Maximum entropy linear manifold for learning discriminative low-dimensional representation.
\newblock In {\em Machine Learning and Knowledge Discovery in Databases: European Conference, ECML PKDD 2015, Porto, Portugal, September 7-11, 2015, Proceedings, Part I 15}, pages 52--67. Springer, 2015.

\bibitem{tschannen2019mutual}
Michael Tschannen, Josip Djolonga, Paul~K Rubenstein, Sylvain Gelly, and Mario Lucic.
\newblock On mutual information maximization for representation learning.
\newblock {\em Proceedings of the International Conference on Learning Representations (ICLR)}, 2020.

\bibitem{klusowski2018approximation}
Jason~M Klusowski and Andrew~R Barron.
\newblock Approximation by combinations of relu and squared relu ridge functions with $\ell^{1}$ and $\ell^{0}$ controls.
\newblock {\em IEEE Transactions on Information Theory}, 64(12):7649--7656, 2018.

\bibitem{barron1993universal}
Andrew~R Barron.
\newblock Universal approximation bounds for superpositions of a sigmoidal function.
\newblock {\em IEEE Transactions on Information theory}, 39(3):930--945, 1993.

\bibitem{kraskov2004estimating}
Alexander Kraskov, Harald St{\"o}gbauer, and Peter Grassberger.
\newblock Estimating mutual information.
\newblock {\em Physical review E}, 69(6):066138, 2004.

\bibitem{malherbe2017global}
C{\'e}dric Malherbe and Nicolas Vayatis.
\newblock Global optimization of lipschitz functions.
\newblock In {\em International Conference on Machine Learning}, pages 2314--2323. PMLR, 2017.

\bibitem{chen2016infogan}
Xi~Chen, Yan Duan, Rein Houthooft, John Schulman, Ilya Sutskever, and Pieter Abbeel.
\newblock Infogan: Interpretable representation learning by information maximizing generative adversarial nets.
\newblock {\em Advances in neural information processing systems}, 29, 2016.

\bibitem{shwartz2023compress}
Ravid Shwartz-Ziv and Yann LeCun.
\newblock To compress or not to compress--self-supervised learning and information theory: A review.
\newblock {\em arXiv preprint arXiv:2304.09355}, 2023.

\bibitem{polyanskiy2022ITbook}
Yury Polyanskiy and Yihong Wu.
\newblock {\em Information Theory: From Coding to Learning}.
\newblock Cambridge University Press, 2022.

\bibitem{magnus2019matrix}
Jan~R Magnus and Heinz Neudecker.
\newblock {\em Matrix differential calculus with applications in statistics and econometrics}.
\newblock John Wiley \& Sons, 2019.

\bibitem{clevert2015fast}
Djork-Arn{\'e} Clevert, Thomas Unterthiner, and Sepp Hochreiter.
\newblock Fast and accurate deep network learning by exponential linear units (elus).
\newblock {\em arXiv preprint arXiv:1511.07289}, 2015.

\bibitem{kingma2014adam}
Diederik~P Kingma and Jimmy Ba.
\newblock Adam: A method for stochastic optimization.
\newblock {\em arXiv preprint arXiv:1412.6980}, 2014.

\bibitem{defazio2014saga}
Aaron Defazio, Francis Bach, and Simon Lacoste-Julien.
\newblock Saga: A fast incremental gradient method with support for non-strongly convex composite objectives.
\newblock {\em Advances in neural information processing systems}, 27, 2014.

\bibitem{krizhevsky2009learning}
Alex Krizhevsky, Geoffrey Hinton, et~al.
\newblock Learning multiple layers of features from tiny images.
\newblock 2009.

\end{thebibliography}
\newpage

\newpage

\clearpage
\iftrue
\setcounter{page}{1}



\appendix
\section{Proofs}
\subsection{Proof of Proposition \ref{prop:msi_properties}}\label{proof:msi_properties}
We note that \textbf{1} is restated and was proved in \cite[Appendix~A.1]{goldfeld2021sliced}
\paragraph{Proof of 2:}
Non-negativity directly follows by non-negativity of mutual information.
Equivalence directly follows from the bound.
When $X\indep Y$ we have $I(X;Y)=0$ and therefore $\msmi_k(X;Y)\leq0$, which implies $\msmi_k(X;Y)=0$ due to non-negativity.
When $\msmi_k(X;Y)=0$ we have $\asmi_k(X;Y)=0$, which implies $X\indep Y$.
\paragraph{Proof of 3:} The representation immediately follows from the representation of mutual information $\sI(X;Y)=\dkl(\mu_{XY}\|\mu_X\otimes\mu_Y)$.
The expressions follows by plugging it into the definition of mSMI (\eqref{eq:msmi_intro} in the main text).
\paragraph{Proof of 4:} For the triplet $(X_1,X_2,Y)$, we have
\begin{align*}
    \msmi_k(X_1,X_2;Y) &= \max_{\rA,\rB,\psi}\sI(\rA^\intercal X_1, \rB^\intercal X_2 ;\psit Y)\\
    &= \max_{\rA,\rB,\psi} \left(\sI(\rA^\intercal X_1;\psit Y) + \sI(\rB^\intercal X_2 ; \psit Y | \rA^\intercal X_1)\right) \\
    &\leq \max_{\rA,\psi} \sI(\rA^\intercal X_1;\psit Y) + \max_{\rA,\rB,\psi}\sI(\rB^\intercal X_2 ; \psit Y | \rA^\intercal X_1) \\
    &= \msmi_k(X_1;Y)+\msmi_k(X_2;Y|X_1),
\end{align*}
where the penultimate inequality follows from the properties of the maximum function.
This proof straightforward generalizes to $n$ variables.
\paragraph{Proof of 5:} The proof relies on the independence of functions of independent random variables. We have 
\begin{align*}
    \msmi_k(X^n,Y^n) &= \max_{\rA_1,\dots,\rA_n,\rB_1,\dots,\rB_n}\sI(\rA^\intercal_1X_1,\dots,\rA^\intercal_n X_n;\rB^\intercal_1Y_1,\dots,\rB^\intercal_n Y_n)\\
    &= \max_{\rA_1,\dots,\rA_n,\rB_1,\dots,\rB_n}\left(\sum_{i=1}^n \sI(\rA^\intercal_i X_i;\rB^\intercal_1Y_1,\dots,\rB^\intercal_n Y_n|\rA^\intercal_1 X_1,\dots,\rA^\intercal_{i-1} X_{i-1}) \right)\\
    &= \max_{\rA_1,\dots,\rA_n,\rB_1,\dots,\rB_n}
    \left(\sum_{i=1}^n \sum_{j=1}^n \sI(\rA^\intercal_i X_i;\rB^\intercal_jY_j|\rA^\intercal_1 X_1,\dots,\rA^\intercal_{i-1} X_{i-1}, \rB^\intercal_1Y_1,\dots,\rB^\intercal_{j-1} Y_{j-1}) \right)\\
    &= \max_{\rA_1,\dots,\rA_n,\rB_1,\dots,\rB_n}\left(\sum_{i=1}^n \sum_{j=1}^n \sI(\rA^\intercal_i X_i;\rB^\intercal_jY_j)
    \right)\\
    &= \sum_{i=1}^n \max_{\rA_i,\rB_i}\sI(\rA^\intercal_i X_i;\rB^\intercal_iY_i),
\end{align*}
where the last inequality follows from the independence of the maximal $\sI(\rA_i^\intercal X_i,\rA_i^\intercal Y_i)$ in $(\rA_j,\rB _j)$.


This concludes the proof. $\hfill\square$

\subsection{Proof of \cref{prop:k_msmi_cca}}\label{subsec:k_msmi_cca_proof}

We first show that the $k$-dimensional Gaussian mSMI can be realized as an optimization of the projected mutual information over the same domain as the CCA problem from~\eqref{eq:k_dim_cca}. This equivalence follows by invariance of mutual information to bijections.  

\begin{lemma}\label{lemma:Gaussian_msmi_domain}
For jointly Gaussian $X\sim \cN (m_X, \Sigma_X)$ and $Y\sim \cN(m_Y,\Sigma_Y)$ with cross-covariance $\Sigma_{XY}$, we have
\begin{align*}
\msmi_k(X;Y)&=\sup_{\substack{(\rA,\rB)\in \RR^{d_x\times k}\times\RR^{d_y\times k}:\\ \rA^\intercal \Sigma_X\rA=\rB^\intercal\Sigma_Y\rB=\rI_k}}\sI(\rA^\intercal X;\rB^\intercal Y)\\
&=\sup_{\substack{(\rA,\rB)\in \RR^{d_x\times k}\times\RR^{d_y\times k}:\\ \rA^\intercal \Sigma_X\rA=\rB^\intercal\Sigma_Y\rB=\rI_k}}\mspace{-12mu}-\frac{1}{2}\log\det\big(\rI_k-(\rA^\intercal\Sigma_{XY}\rB)^\intercal(\rA^\intercal\Sigma_{XY}\rB)\big).
\end{align*}
\end{lemma}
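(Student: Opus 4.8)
The plan is to prove the two equalities separately: the first (matching the Stiefel domain to the whitening domain) by an invariance argument, and the second (the closed-form $\log\det$ expression) by the standard Gaussian mutual information formula.

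For the first equality, the central observation is that $\sI(\rA^\intercal X;\rB^\intercal Y)$ is invariant under right-multiplication of $\rA$ and $\rB$ by invertible $k\times k$ matrices: replacing $\rA$ by $\rA R$ with $R$ invertible applies the bijection $u\mapsto R^\intercal u$ to $\rA^\intercal X$, and mutual information is invariant to bijections. Hence the objective depends only on the pair of column spaces $(\mathrm{col}(\rA),\mathrm{col}(\rB))$, not on the chosen representatives. It then suffices to check that the Stiefel set $\sti(k,d_x)\times\sti(k,d_y)$ and the whitening set $\{(\rA,\rB):\rA^\intercal\Sigma_X\rA=\rB^\intercal\Sigma_Y\rB=\rI_k\}$ realize the same collection of pairs of $k$-dimensional subspaces. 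Given a Stiefel $\rA$, the matrix $\rA(\rA^\intercal\Sigma_X\rA)^{-1/2}$ is well-defined (as $\Sigma_X\succ0$ makes $\rA^\intercal\Sigma_X\rA$ positive definite), lies in the whitening set, and shares the column space of $\rA$; conversely, orthonormalizing the columns of any whitening-constrained matrix returns a Stiefel matrix with the same column space. Since the objective factors through the column spaces, the two families attain exactly the same set of values, and the suprema coincide.

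For the second equality, fix $(\rA,\rB)$ in the whitening set and set $U=\rA^\intercal X$, $V=\rB^\intercal Y$. Then $(U,V)$ is jointly Gaussian with $\Sigma_U=\rA^\intercal\Sigma_X\rA=\rI_k$, $\Sigma_V=\rB^\intercal\Sigma_Y\rB=\rI_k$, and cross-covariance $\Sigma_{UV}=\rA^\intercal\Sigma_{XY}\rB=:C$. The Gaussian identity $\sI(U;V)=\tfrac12\log\big(\det\Sigma_U\det\Sigma_V/\det\Sigma_{(U,V)}\big)$ together with the Schur-complement evaluation $\det\Sigma_{(U,V)}=\det(\rI_k)\det(\rI_k-C^\intercal C)=\det(\rI_k-C^\intercal C)$ yields $\sI(\rA^\intercal X;\rB^\intercal Y)=-\tfrac12\log\det\big(\rI_k-(\rA^\intercal\Sigma_{XY}\rB)^\intercal(\rA^\intercal\Sigma_{XY}\rB)\big)$, which is the claimed expression; taking the supremum over the whitening set finishes the argument.

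The only genuinely delicate point is the domain-equivalence step, where one must confirm both that the objective truly depends on $\rA,\rB$ only through their column spaces and that the whitening constraint is no more restrictive than the Stiefel constraint at the level of subspaces. Both hinge on positive-definiteness of $\Sigma_X$ and $\Sigma_Y$, which guarantees the invertibility of $\rA^\intercal\Sigma_X\rA$ and $\rB^\intercal\Sigma_Y\rB$; once this correspondence is in place the remaining computation is routine. I would also note in passing that the supremum is attained, since the continuous objective descends to the compact set of $k$-dimensional subspaces, though this is not needed for the stated equalities.
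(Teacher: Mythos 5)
Your proof is correct and follows essentially the same route as the paper's: both reduce the domain equivalence to the invariance of mutual information under invertible $k\times k$ right-multiplications (the paper via compact SVD and an eigendecomposition-based whitening step, you via the equivalent observation that the objective descends to column spaces and that both constraint sets realize the same subspaces), and both obtain the second equality from the Gaussian log-determinant formula with a Schur-complement evaluation. No gaps; the packaging differs only cosmetically.
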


\begin{proof}
By translation invariance of mutual information, we may assume w.l.o.g. that the means are zero, i.e., $m_X=m_Y=0$. The mSMI is defined as a supremum over pairs of matrices from the Stiefel manifold (cf. \cref{def:msmi_def}). We first show that changing the optimization domain to the space of all $(\rA,\rB)\in\RR^{d_x\times k}\times \RR^{d_y\times k}$ without changing the mSMI value. Fix $(\rA,\rB)\in\RR^{d_x\times k}\times \RR^{d_y\times k}$ and let $\rA=\rU_{\rA}\Sigma_{\rA}\rV_\rA^\intercal$ and $\rB=\rU_{\rB}\Sigma_{\rB}\rV_\rB^\intercal$ be their compact SVDs, i.e., such that $\Sigma_{\rA},\Sigma_{\rB}\in\RR^{k\times k}$. By invariance of mutual information, we have
\[
\sI(\rAt X;\rBt Y)=\sI(\rV_{\rA}\Sigma_{\rA}\rU_\rA^\intercal X;\rV_{\rB}\Sigma_{\rB}\rU_\rB^\intercal Y)=\sI(\rU_\rA^\intercal X;\rU_\rB^\intercal Y),
\]
since $\rV_\rA\Sigma_\rA,\rV_\rB\Sigma_\rB\in\RR^{k\times k}$ are invertible. Noticing that $(\rU_\rA,\rU_\rB)\in\sti(k,d_x)\times\sti(k,d_y)$, we obtain
\[\msmi_k(X;Y)=\sup_{(\rA,\rB)\in\RR^{d_x\times k}\times \RR^{d_y\times k}}\sI(\rAt X;\rBt Y).\]

Next, we show that we may equivalently optimize with the added unit variance constraint. For $(\rA,\rB)\in\RR^{d_x\times k}\times \RR^{d_y\times k}$, define $\Gamma_\rA=\rAt \Sigma_X\rA$ and $\Gamma_\rB=\rBt \Sigma_Y\rB$, and consider their respective eigenvalue decompositions $\Gamma_\rA=\rW_\rA\Lambda_\rA\rW_\rA^\intercal$ and $\Gamma_\rB=\rW_\rB\Lambda_\rB\rW_\rB^\intercal$. By invariance, once more, we have
\[\sI(\rAt X;\rBt Y)=\sI\big(\Lambda_\rA^{-\frac 12}\rU_\rA^\intercal \rAt X;\Lambda_\rB^{-\frac 12}\rU_\rB^\intercal \rBt Y\big)=\sI(\tilde\rA^\intercal X;\tilde\rB^\intercal Y\big),
\]
where $\tilde{\rA}=\rA\rU_\rA\Lambda_\rA^{-1/2}$ and $\tilde{\rB}=\rB\rU_\rB\Lambda_\rB^{-1/2}$, for which we have $\tilde\rA^\intercal\Sigma_X\tilde\rA=\tilde\rB^\intercal\Sigma_Y\tilde\rB=\rI_k$. This proves the first equality in \cref{lemma:Gaussian_msmi_domain}.

For the second inequality, fix $(\rA,\rB)\in\RR^{d_x\times k}\times\RR^{d_y\times k}$ with $\rA^\intercal\Sigma_X\rA=\rB^\intercal\Sigma_Y\rB=\rI_k$, and note that $\rAt X\sim \cN(0,\rAt\Sigma_X\rA)$ and $\rBt Y\sim \cN(0,\rBt\Sigma_Y\rB)$ are jointly Gaussian with cross-covariance $\rAt\Sigma_{XY}\rB$. By the closed-form expression for mutual information between Gaussians (cf. \cite[Example 3.4]{polyanskiy2022ITbook}), we have
\begin{align*}
\sI(\rAt X;\rBt Y)&=-\frac{1}{2}\log\det\left(\left[\begin{array}{cc}
    \rAt\Sigma_X\rA & \rAt\Sigma_{XY}\rB \\
    \rBt\Sigma_{XY}^\intercal\rA & \rBt\Sigma_Y\rB
\end{array}\right]\right)\\
&=-\frac 12\log\big(\rI_k-(\rAt\Sigma_{XY}\rB)^\intercal(\rAt\Sigma_{XY}\rB)\big),
\end{align*}
where the last equality uses the unit variance property and Schur's determinant formula.
\end{proof}

Armed with \cref{lemma:Gaussian_msmi_domain}, we are in place to prove \cref{prop:k_msmi_cca}. Since the CCA solutions $(\rA_\cca,\rB_\cca)$ satisfy the unit variance constraint, we trivially have $\msmi_k(X;Y)\geq \sI(\rA_\cca^\intercal X;\rB_\cca^\intercal Y)$. Recall that $(\rA_\cca,\rB_\cca)=(\Sigma_X^{-1/2}\rU,\Sigma_Y^{-1/2}\rV)$, where $\rU$ and $\rV$ are obtained from the SVD of $\rT_{XY}=\Sigma_X^{-1/2}\Sigma_{XY}\Sigma_Y^{-1/2}=\rU\Lambda\rV^\intercal$ and contain its first $k$ left- and right-singular vectors of $\rT_{XY}$ in their columns; the matrix $\Lambda$ is diagonal and contains the top $k$ singular values of $\rT_{XY}$. Noticing that $(\rA_\cca^\intercal\Sigma_{XY}\rB_\cca)^\intercal(\rA_\cca^\intercal\Sigma_{XY}\rB_\cca)=\Lambda^2$, we have
\begin{equation}
\msmi_k(X;Y)\geq \sI(\rA_\cca^\intercal X;\rB_\cca^\intercal Y)=-\frac 12\log\det(\rI_k-\Lambda^2)=-\frac 12\sum_{i=1}^k\log\big(1-\sigma_i(\rT_{XY})^2\big).\label{eq:Gaussian_msmi_LB}    
\end{equation}
Further observe that $\sigma_i(\rT_{XY})\leq 1$, for all $i=1,\ldots k$. Indeed, for any unit vectors $(a,b)\in\unitsphx\times\unitsphy$, the value $a^\intercal \rT_{XY}b=a^\intercal\Sigma_X^{-1/2}\Sigma_{XY}\Sigma_Y^{-1/2}b$ is exactly the correlation~coefficient $\rho(a^\intercal X,b^\intercal Y)\in[-1,1]$. Taking the supremum over all such vector pairs we arrive at the operator norm of $\rT_{XY}$, which coincides with its largest singular value. In sum, $\sigma_1(\rT_{XY})=\|\rT_{XY}\|_{\op}\leq 1$

For the opposite inequality, we use a generalization of the Poincar\'e separation theorem from \cite[Theorem 2.2]{rao1979separation}, which is restated next for completeness.

\begin{theorem}[Generalized Poincar\'e separation \cite{rao1979separation}]\label{thm:poincare_separation}
 Let $\Sigma\in\RR^{m\times n}$ and $(\rA,\rB)\in\sti(r,m)\times\sti(k,n)$. Then   
\[
\sigma_{t+i}(\Sigma)\leq \sigma_i(\rAt\Sigma\rB)\leq \sigma_i(\Sigma),\quad i=1,\ldots,r\wedge k,
\]
where $t=m+n-r-k$.
\end{theorem}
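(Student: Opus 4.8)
The plan is to reduce this rectangular singular-value interlacing statement to the classical (symmetric) Poincar\'e separation theorem by passing through the Jordan--Wielandt dilation, which converts singular values into eigenvalues of a symmetric matrix. I will take as known the classical result (itself a direct consequence of the Courant--Fischer min-max principle): for symmetric $S\in\RR^{N\times N}$ and any $P$ with orthonormal columns, $P^\intercal P=\rI_s$, the eigenvalues interlace as $\lambda_{N-s+j}(S)\le\lambda_j(P^\intercal S P)\le\lambda_j(S)$ for $j=1,\dots,s$. First I would introduce the dilations
\[
\tilde\Sigma=\begin{pmatrix}0&\Sigma\\\Sigma^\intercal&0\end{pmatrix}\in\RR^{(m+n)\times(m+n)},\qquad \widetilde{M}=\begin{pmatrix}0&M\\M^\intercal&0\end{pmatrix},\quad M=\rA^\intercal\Sigma\rB,
\]
recalling that the nonzero eigenvalues of a dilation are exactly $\pm$ the singular values of the original block, so that $\lambda_j(\tilde\Sigma)=\sigma_j(\Sigma)$ for $j\le m\wedge n$ and $\lambda_i(\widetilde{M})=\sigma_i(\rA^\intercal\Sigma\rB)$ for $i\le r\wedge k$, with the remaining eigenvalues of each dilation being zeros and the sign-flipped singular values.

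The crux of the argument is the compression identity. Forming the block-diagonal matrix $P=\begin{pmatrix}\rA&0\\0&\rB\end{pmatrix}\in\RR^{(m+n)\times(r+k)}$, I would verify that $P$ has orthonormal columns, $P^\intercal P=\rI_{r+k}$, because $\rA^\intercal\rA=\rI_r$ and $\rB^\intercal\rB=\rI_k$; and then that
\[
P^\intercal\tilde\Sigma P=\begin{pmatrix}0&\rA^\intercal\Sigma\rB\\\rB^\intercal\Sigma^\intercal\rA&0\end{pmatrix}=\widetilde{\rA^\intercal\Sigma\rB},
\]
i.e.\ dilating and then compressing coincides with compressing and then dilating. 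Applying classical Poincar\'e separation to the pair $(\tilde\Sigma,P)$ with $N=m+n$, $s=r+k$, and noting $N-s=t$, gives for each $i=1,\dots,r\wedge k$ the chain
\[
\lambda_{t+i}(\tilde\Sigma)\le\lambda_i(\widetilde{\rA^\intercal\Sigma\rB})=\sigma_i(\rA^\intercal\Sigma\rB)\le\lambda_i(\tilde\Sigma)=\sigma_i(\Sigma),
\]
where the rightmost equality is valid because $i\le r\wedge k\le m\wedge n$. The upper bound is thereby immediate, and for the lower bound I would translate the leftmost term: whenever $t+i\le m\wedge n$ the dilation spectrum gives $\lambda_{t+i}(\tilde\Sigma)=\sigma_{t+i}(\Sigma)$, yielding $\sigma_{t+i}(\Sigma)\le\sigma_i(\rA^\intercal\Sigma\rB)$ as claimed.

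The step I expect to be the main obstacle is the index bookkeeping at the lower end. Since $t+i$ can exceed $m\wedge n$, the eigenvalue $\lambda_{t+i}(\tilde\Sigma)$ may land among the zero or sign-flipped eigenvalues of $\tilde\Sigma$, where it no longer equals $\sigma_{t+i}(\Sigma)$. I would resolve this with the standard convention that singular values beyond index $m\wedge n$ are defined to be zero: in that regime the asserted inequality reduces to $0\le\sigma_i(\rA^\intercal\Sigma\rB)$, which holds trivially, so the bound is valid across all cases. I would also record the routine checks that $1\le t+i\le m+n$ (using $r+k\le m+n$ and $i\le r\wedge k$) so that the classical theorem applies, and that $r\wedge k\le m\wedge n$ so the upper-bound translation is always legitimate.
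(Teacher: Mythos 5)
Your proof is correct, but there is nothing in the paper to compare it against: the paper does not prove this theorem at all, it imports it verbatim as Theorem~2.2 of \cite{rao1979separation} and restates it ``for completeness.'' So your argument is a genuine self-contained alternative to Rao's original derivation, which works directly with variational (min-max over subspaces) characterizations of singular values. Your route instead symmetrizes via the Jordan--Wielandt dilation and reduces everything to the classical symmetric Poincar\'e separation theorem---incidentally the very result the paper does invoke elsewhere (Supplement~\ref{appendix:gaussian_pca}, citing \cite{magnus2019matrix}), so the reduction is in the spirit of the paper's toolkit. The compression identity $P^\intercal\tilde\Sigma P=\widetilde{\rA^\intercal\Sigma\rB}$ with $P=\mathrm{diag}(\rA,\rB)\in\sti(r+k,m+n)$ is verified correctly, and it delivers both bounds in a single application, with the index shift $N-s=(m+n)-(r+k)=t$ matching the statement exactly; this is cleaner than the tempting reduction through $\sigma_i(\rA^\intercal\Sigma\rB)^2=\lambda_i(\rA^\intercal\Sigma\rB\rB^\intercal\Sigma^\intercal\rA)$, which yields the upper bound easily but mangles the lower index. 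Your bookkeeping at the lower end is also right \emph{and necessary}: $t+i$ can genuinely exceed $m\wedge n$ (e.g.\ $m=n=10$, $r=k=1$ gives $t+1=19$), so the theorem as stated only parses under the convention $\sigma_j(\Sigma)=0$ for $j>m\wedge n$---the paper's notation section defines only the nonzero singular values, so flagging this is a real clarification, not pedantry---and in that regime the inequality degenerates to $0\le\sigma_i(\rA^\intercal\Sigma\rB)$ exactly as you say. One contextual note: in its sole application (the chain of inequalities \eqref{eq:Gaussian_msmi_singular_vals} in the proof of Proposition~\ref{prop:k_msmi_cca}), the paper uses only the upper bound $\sigma_i(\rA^\intercal\Sigma\rB)\le\sigma_i(\Sigma)$, where none of these edge cases arise.
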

For any $(\rA,\rB)\in\RR^{d_x\times k}\times \RR^{d_y\times k}$ with $\rAt \Sigma_X\rA=\rBt \Sigma_Y\rB=\rI_k$, defining $\rA_X=\Sigma_X\rA$ and $\rB_Y=\Sigma_Y\rB$, note that $(\rA_X,\rB_Y)\in\sti(k,d_x)\times \sti(k,d_y)$ and $\rAt \Sigma_{XY}\rB=\rA_X^\intercal\rT_{XY}\rB_Y$.
By \cref{thm:poincare_separation}, we obtain
\begin{equation}
\sigma_i(\rAt\Sigma_{XY}\rB)=\sigma_i(\rA_X^\intercal\rT_{XY}\rB_Y)\leq \sigma_i(\rT_{XY}),\quad i=1,\ldots,k.\label{eq:Gaussian_msmi_singular_vals}
\end{equation}
Starting from the log-determinant expression in \cref{lemma:Gaussian_msmi_domain}, consider
\begin{align*}
    \msmi_k(X;Y)&=\sup_{\substack{(\rA,\rB)\in \RR^{d_x\times k}\times\RR^{d_y\times k}:\\ \rA^\intercal \Sigma_X\rA=\rB^\intercal\Sigma_Y\rB=\rI_k}}-\frac{1}{2}\log\det\big(\rI_k-(\rA^\intercal\Sigma_{XY}\rB)^\intercal(\rA^\intercal\Sigma_{XY}\rB)\big)\\
    &=\sup_{\substack{(\rA,\rB)\in \RR^{d_x\times k}\times\RR^{d_y\times k}:\\ \rA^\intercal \Sigma_X\rA=\rB^\intercal\Sigma_Y\rB=\rI_k}}-\frac{1}{2}\sum_{i=1}^k\log\big(1-\sigma_i(\rAt\Sigma_{XY}\rB)^2\big)\\
    &=\sup_{\substack{(\rA,\rB)\in \RR^{d_x\times k}\times\RR^{d_y\times k}:\\ \rA^\intercal \Sigma_X\rA=\rB^\intercal\Sigma_Y\rB=\rI_k}}-\frac{1}{2}\sum_{i=1}^k\log\big(1-\sigma_i(\rA_X^\intercal\rT_{XY}\rB_Y)^2\big)\\
    &\leq -\frac{1}{2}\sum_{i=1}^k\log\big(1-\sigma_i(\rT_{XY})^2\big),\numberthis\label{eq:Gaussian_msmi_UB}
\end{align*}
where the last two steps use \eqref{eq:Gaussian_msmi_singular_vals} and the fact that $x\mapsto -\log(1-x)$ is monotonically increasing (for the last inequality). Combining \eqref{eq:Gaussian_msmi_LB} and \eqref{eq:Gaussian_msmi_UB} yields the result.$\hfill\square$


\subsection{Equivalence Between Max-Sliced Entropy and PCA}\label{appendix:gaussian_pca}

The argument is similar to that in the proof of \cref{prop:k_msmi_cca}. Let $X\sim \cN(m,\Sigma)$ and assume w.l.o.g. that $m=0$ and $\Sigma\in\RR^{d\times d}$ is full-rank. The $k$-dimensional PCA problem for $\Sigma$ is
\[
\sup_{\rA\in\sti(k,d)} \Tr(\rAt\Sigma\rA)
\]
and the global optimum $\rA_{\mathsf{PCA}}$ is the matrix that contains the first $k$ eigenvectors of $\Sigma$ (i.e., corresponding to the largest $k$ eigenvalues). Consequently, 
\begin{align*}
\msh_k(X) &= \sup_{\rA\in\sti(k,d)}\sh(A^\intercal X)\\
&= \sup_{\rA\in\sti(k,d)}\frac 12\log\big((2\pi e)^k \det(\rA^\intercal \Sigma \rA)\big)\\
&= \sup_{\rA\in\sti(k,d)}\frac 12\sum_{i=1}^k\log\big(2\pi e \lambda_i(\rAt\Sigma\rA)\big)\\
&=\frac 12\sum_{i=1}^k\log\big(2\pi e \lambda_i(\Sigma)\big),
\end{align*}
where the second equality is the formula for the differential entropy of a $k$-dimensional Gaussian random vector, while the last one is justified via two-sided inequalities as follows. The $\geq$ relation follows by substituting the PCA solution $\rA_\mathsf{PCA}$. For the reverse inequality we use the Poincar\'e separation theorem (cf., e.g., \cite[Theorem~10.10]{magnus2019matrix}), whereby for any $\rA\in\sti(k,d)$ we have
\[
\lambda_{d-k-i}(\Sigma)\leq \lambda_i(\rAt\Sigma\rA)\leq \lambda_i(\Sigma),\quad \forall i=1,\ldots,k.
\]
Together with the monotonicity of the logarithm this yields the result. 

\subsection{Proof of Theorem \ref{thm:msmi_NE}}\label{appendix:msmi_NE}
The proof leverages an error bound that is uniform over all $\mu_{XY}\in\cP_k(M,b)$, from which a minimax bound will follow.
Fix $\mu_{XY}\in\cP_k(M,b)$. We have
$$
    \EE\left[ \left|\msmi_k(X,Y) - \widehat{\mathsf{SI}}_k^{n,l}\right| \right]\leq\max_{(\rA,\rB)\in\sti(k,d_x)\times\sti(k,d_y)}\EE\left[ \left|\sI(\rA^\intercal X,\rB^\intercal Y) - \widehat{\mathsf{I}}((\rA^\intercal X)^n,(\rB^\intercal Y)^n)\right| \right],
$$
where $\widehat{\mathsf{I}}((\rA^\intercal X),(\rB^\intercal Y))$ is a neural estimator of $\sI(\rA^\intercal X,\rB^\intercal Y)$, calculated from $((\rA^\intercal X)^n,(\rB^\intercal Y)^n):=\{(\rA^\intercal X_i,\rB^\intercal Y_i)\}_{i=1}^n$ with $(X^n,Y^n)$ that are independent and identically distributed according to $\mu_{XY}$. 
Thus, a uniform bound over $(\rA,\rB)\in\sti(k,d_x)\times\sti(k,d_y)$ will suffice to bound the maximum.
We obtain such bound via the following result \cite[Lemma~5]{goldfeld2022k}:
\begin{proposition}[Neural estimation of $\sI(\rA^\intercal X; \rB^\intercal Y)$]\label{prop:unif_ne_bound}
    Let $\mu_{XY}\in\cP_k(M,b)$. Then, uniformly in $(\rA,\rB)\in\sti(k,x_d)\times\sti(k,d_y)$, we have the neural estimation bound
    \begin{equation}\label{eq:moment_constraints}
        \EE\left[ \left|\sI(\rA^\intercal X,\rB^\intercal Y) - \widehat{\mathsf{I}}((\rA^\intercal X)^n,(\rB^\intercal Y)^n)\right| \right]\leq Ck^{\frac{1}{2}}(l^{-\frac{1}{2}}+kn^{-\frac{1}{2}})
    \end{equation}
    where the constant $C$ depends on $M,b,k$, and $\|\cX\times\cY\|$
\end{proposition}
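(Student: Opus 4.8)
The plan is to establish Proposition~\ref{prop:unif_ne_bound}, the technical core of Theorem~\ref{thm:msmi_NE}, by recognizing the inner quantity as a neural estimation error for a $2k$--dimensional mutual information and then specializing the neural $f$--divergence bound of \cite[Proposition~2]{sreekumar2022neural} to this setting, taking care that the resulting constant is uniform over the Stiefel manifolds. Fix $(\rA,\rB)\in\sti(k,d_x)\times\sti(k,d_y)$ and write $\nu_{\rA,\rB}\coloneqq(\mathfrak{p}^\rA,\mathfrak{p}^\rB)_\sharp\mu_{XY}$ for the law of the projected pair $(U,V)=(\rA^\intercal X,\rB^\intercal Y)\in\RR^{k}\times\RR^{k}$, with marginals $\nu_U,\nu_V$. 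Then $\sI(\rA^\intercal X;\rB^\intercal Y)=\dkl\big(\nu_{\rA,\rB}\,\|\,\nu_U\otimes\nu_V\big)$ is the KL divergence of a $2k$--dimensional joint law against the product of its marginals, and $\widehat{\mathsf{I}}$ is exactly its DV neural estimator over $\cF_{\mathsf{nn}}^\ell$.

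Given this identification, the bound from \cite{sreekumar2022neural} applies to DV--based neural estimation of KL (hence mutual information) in ambient dimension $D=2k$, returning an error whose dimension dependence is of the form $C'\big(\sqrt{D}\,\ell^{-1/2}+D^{3/2}n^{-1/2}\big)=C'\big(\sqrt{2k}\,\ell^{-1/2}+(2k)^{3/2}n^{-1/2}\big)$, which is precisely the claimed $Ck^{1/2}(\ell^{-1/2}+kn^{-1/2})$ up to absolute constants. The only hypotheses of that bound are that the relevant log--densities (the $\log$ of the joint $\nu_{\rA,\rB}$ and of its marginals) admit extensions to an open set with $\lfloor D/2\rfloor+3=k+3$ continuous and uniformly bounded derivatives, and that the support radius is controlled; note $\|\rA^\intercal x\|\leq\|x\|$ for $\rA$ with orthonormal columns, so the projected support radius is bounded by $\|\cX\times\cY\|$ uniformly. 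The crux is therefore to verify the smoothness premises with constants that do not depend on $(\rA,\rB)$.

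This uniform smoothness is the main obstacle. I would argue it by completing $\rA$ and $\rB$ to orthogonal matrices via $\rA_\perp\in\RR^{d_x\times(d_x-k)}$ and $\rB_\perp\in\RR^{d_y\times(d_y-k)}$, so that the projected density is obtained by integrating the joint density over the complementary coordinates,
\[
g_{\rA,\rB}(u,v)=\int\!\!\int f_{XY}\big(\rA u+\rA_\perp s,\,\rB v+\rB_\perp t\big)\,dt\,ds,
\]
an integral over a compact region since $\supp(\mu_{XY})\subseteq\cX\times\cY$ is bounded and the orthogonal change of variables has unit Jacobian. Writing $f_{XY}=e^{r}$ with $r\in\cC^{k+3}_{b}(\cU)$ and differentiating under the integral sign (justified by compactness of the domain and boundedness of the derivatives of $r$), every partial derivative $D^\alpha g_{\rA,\rB}$ with $\|\alpha\|_1\leq k+3$ is a bounded integral; the chain rule introduces only entries of $\rA,\rA_\perp,\rB,\rB_\perp$, which are bounded by $1$ since these matrices are orthonormal. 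Hence $g_{\rA,\rB}\in\cC^{k+3}$ with derivative bounds expressible through $b$, $k$, and the volume of the integration region (controlled by $\|\cX\times\cY\|$), and these bounds are invariant under the choice of $(\rA,\rB)$. Since $\|r\|_\infty\leq b$ gives $e^{-b}\leq f_{XY}\leq e^{b}$ on the support, $g_{\rA,\rB}$ is likewise bounded away from $0$ and above, so $\log g_{\rA,\rB}$ inherits a $\cC^{k+3}$ extension with uniformly controlled derivatives; the marginals $\nu_U,\nu_V$ are further marginalizations to which the identical argument applies. Feeding these uniform smoothness constants into \cite[Proposition~2]{sreekumar2022neural} yields \eqref{eq:moment_constraints} with a constant depending only on $M,b,k$, and $\|\cX\times\cY\|$. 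The essential work thus lies not in the estimation machinery, which is imported wholesale, but in showing that marginalizing out $d_x+d_y-2k$ directions neither destroys the $\cC^{k+3}$ regularity nor lets the derivative bounds blow up as $(\rA,\rB)$ range over the compact Stiefel manifolds.
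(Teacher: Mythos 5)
Your overall strategy is the same as the paper's: the paper proves this proposition by citing \cite[Lemma~5]{goldfeld2022k}, whose content is exactly what you reconstruct --- identify $\sI(\rA^\intercal X;\rB^\intercal Y)$ as a $2k$-dimensional mutual information, invoke the neural estimation bound of \cite[Proposition~2]{sreekumar2022neural} in ambient dimension $2k$ (whence the smoothness order $k+3$ and the $k^{1/2}(\ell^{-1/2}+kn^{-1/2})$ scaling), and verify that the required smoothness of the pushforward densities holds with constants uniform over $(\rA,\rB)\in\sti(k,d_x)\times\sti(k,d_y)$. Your marginalization formula, the differentiation under the integral sign, and the observation that the chain rule only introduces entries of orthonormal matrices (hence bounded by $1$ uniformly over the Stiefel manifolds) are all the right ingredients for the upper bounds on $D^\alpha g_{\rA,\rB}$.

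There is, however, a genuine gap in the step where you conclude that $g_{\rA,\rB}$ is ``bounded away from $0$'' and that $\log g_{\rA,\rB}$ therefore inherits a $\cC^{k+3}_b$ extension. The lower bound $f_{XY}\geq e^{-b}$ on $\cX\times\cY$ gives $g_{\rA,\rB}(u,v)\geq e^{-b}\cdot\mathrm{Vol}\big(\text{fiber over }(u,v)\big)$, and for a general compact support (e.g.\ $\cX$ a Euclidean ball) the fiber volume shrinks to zero as $(u,v)$ approaches the boundary of the projected support. In that case $\log g_{\rA,\rB}$ is unbounded below and its derivatives blow up near the boundary, so the individual marginal log-densities do \emph{not} lie in $\cC^{k+3}_b$ of any neighborhood of the projected support. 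What the estimation bound of \cite{sreekumar2022neural} actually requires is control of the optimal DV potential, i.e.\ the log-likelihood \emph{ratio} $\log\big(f_{UV}/(f_U f_V)\big)$ of the projected joint against the product of projected marginals; there the fiber-volume degeneracies appear in both numerator and denominator and must be shown to cancel (to all orders up to $k+3$), which is the delicate part of the cited lemma and is not supplied by your argument. So the missing idea is not the estimation machinery or the uniformity over the Stiefel manifolds, but the passage from smoothness of $\log f_{XY}$ to smoothness of the projected log-\emph{ratio} near the boundary of the projected support.
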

The proof of proposition \ref{prop:unif_ne_bound} consists of showing that the densities of the pushforward measures $(\mathfrak{p}^\rA,\mathfrak{p}^\rB)_\sharp\mu_{XY}$ and $\mathfrak{p}^\rA_\sharp\mu_X\otimes\mathfrak{p}^\rB_\sharp\mu_Y$ satisfy certain smoothness conditions that are sufficient for the spectral condition of the neural estimation bound from \cite{sreekumar2022neural}.

Consequently, we have a bound that is uniform in both $(\rA,\rB)\in\sti(k,d_x)\times\sti(k,d_y)$ and $\mu_{XY}\in\cP_k(M,b)$, providing us with the desired result.$\hfill\square$

\section{Maximization of Max-Sliced Entropy}\label{supp:msh_maximization}

Let $\mu\in\cP(\cX)$ and let $\mu_\rA=\mathfrak{p}^\rA_\sharp\mu$ be the distribution of $\rAt X$.
We next define two classes of distributions and characterize the corresponding max-sliced entropy maximizing distribution from each class.

\paragraph{Mean and covariance constraints.}
The following lemma shows that the Gaussian distribution maximizes max-sliced entropy under first and second moment constraints.
\begin{lemma}
    Let $\cP_1(m,\Sigma):=\big\{\mu\in\cP(\RR^d):\,\supp(\mu)=\RR^d\,,\,\EE_\mu[X]=m\,,\,\EE_\mu\big[(X-m)(X-m)^\intercal \big]=\Sigma\big\}$ be the class of probability measures on $\RR^d$ with fixed mean and covariance. Then,
    \begin{equation}
        \argmax_{\mu\in\cP_1(m,\Sigma)}\msh_k(\mu) = \cN(m,\Sigma).
    \end{equation}
\end{lemma}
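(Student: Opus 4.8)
The plan is to reduce the claim to the classical maximum differential entropy theorem applied slice-by-slice, exploiting the fact that the Gaussian is simultaneously optimal for \emph{every} projection direction and hence dominates the supremized functional $\msh_k$. First I would fix an arbitrary $\mu\in\cP_1(m,\Sigma)$ with $X\sim\mu$ and an arbitrary $\rA\in\sti(k,d)$. Since $\EE_\mu[X]=m$ and $\EE_\mu[(X-m)(X-m)^\intercal]=\Sigma$, the projected vector $\rA^\intercal X$ has mean $\rA^\intercal m$ and covariance $\rA^\intercal\Sigma\rA$ (which is nonsingular because full support forces $\Sigma\succ 0$). As differential entropy is translation invariant, the mean is immaterial, and the classical maximum entropy principle on $\RR^k$ (cf. \cite{CovThom06}) gives, among all laws with covariance $\rA^\intercal\Sigma\rA$,
\[
\sh(\rA^\intercal X)\leq \frac{1}{2}\log\big((2\pi e)^k\det(\rA^\intercal\Sigma\rA)\big).
\]
Taking the supremum over $\rA\in\sti(k,d)$ yields $\msh_k(\mu)\leq \sup_{\rA\in\sti(k,d)}\tfrac12\log\big((2\pi e)^k\det(\rA^\intercal\Sigma\rA)\big)$.

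Next I would check that $\cN(m,\Sigma)$ attains this bound. If $X\sim\cN(m,\Sigma)$ then $\rA^\intercal X\sim\cN(\rA^\intercal m,\rA^\intercal\Sigma\rA)$ is exactly Gaussian for every $\rA$, so the per-slice inequality holds with equality for all $\rA$ \emph{simultaneously}. Hence $\msh_k(\cN(m,\Sigma))=\sup_{\rA}\tfrac12\log\big((2\pi e)^k\det(\rA^\intercal\Sigma\rA)\big)$, which by the PCA computation in Supplement~\ref{appendix:gaussian_pca} equals $\tfrac12\sum_{i=1}^k\log(2\pi e\,\lambda_i(\Sigma))$. Combining the two bounds shows $\msh_k(\mu)\leq \msh_k(\cN(m,\Sigma))$ for every $\mu\in\cP_1(m,\Sigma)$, with the Gaussian achieving equality; thus $\cN(m,\Sigma)$ is a maximizer.

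The delicate point, and the step I expect to be the main obstacle, is \emph{uniqueness} of the maximizer. Equality throughout only requires that some optimal (PCA) direction $\rA^\star$ satisfy that $\rA^{\star\intercal}X$ is exactly Gaussian, which constrains $\mu$ through its top-$k$ projection alone and not its full law. Indeed, for $k<d$ one can build non-Gaussian $\mu\in\cP_1(m,\Sigma)$ (e.g.\ a product of a standard Gaussian coordinate with an independent, full-support, unit-variance non-Gaussian coordinate) whose best slice is Gaussian, so that $\msh_k(\mu)=\msh_k(\cN(m,\Sigma))$. I therefore expect the statement to assert that $\cN(m,\Sigma)$ \emph{belongs to} the argmax set, the substantive content being the clean slice-wise domination above. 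If full uniqueness is intended, I would restrict to $k=d$: there $\sti(d,d)$ is the orthogonal group, so $\sh(\rA^\intercal X)=\sh(X)$ and $\msh_d=\sh$ reduces to ordinary differential entropy, whence the classical equality condition forces $\mu=\cN(m,\Sigma)$.
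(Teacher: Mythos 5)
Your argument is correct and is essentially the paper's own proof: both fix $\mu\in\cP_1(m,\Sigma)$ and $\rA\in\sti(k,d)$, apply the classical Gaussian maximum-entropy principle to the projected law with mean $\rA^\intercal m$ and covariance $\rA^\intercal\Sigma\rA$, and conclude by noting that $\cN(m,\Sigma)$ attains the resulting bound simultaneously for every slice. Your further observation about non-uniqueness for $k<d$ is also well taken --- the paper's proof likewise only establishes that $\cN(m,\Sigma)$ \emph{belongs to} the argmax set, so the displayed equality is best read as containment.
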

\begin{proof}
    Fix $\mu\in\cP_1(m,\Sigma)$ and $\rA\in\sti(k,d)$. The distribution of $\rA^\intercal X$ also has fixed mean $\rA^\intercal m$ and covariance matrix $\rA^\intercal\Sigma\rA$.
    Among all distributions with these mean and covariance, it is the Gaussian distribution $\cN(\rA^\intercal m,\rA^\intercal\Sigma\rA)$ that maximizes differential entropy \cite{CovThom06}.
    Consequently, 
    \[\sup_{\mu\in\cP_1(m,\Sigma)}\mspace{-5mu}\msh_k(\mu)\mspace{-2mu}=\mspace{-5mu}\sup_{\rA\in\sti(k,d)}\sup_{\mu\in\cP_1(m,\Sigma)}\mspace{-5mu}\sh(\mathfrak{p}^\rA_\sharp\mu)\leq\mspace{-5mu}\sup_{\rA\in\sti(k,d)}\sh\big(\cN(\rA^\intercal m,\rA^\intercal\Sigma\rA)\big)=\msh_k\big(\cN(m,\Sigma)\big),
    \]
    and the inequality is achieved by setting $\mu=\cN(m,\Sigma)$. This proves the claim. 
\end{proof}

\paragraph{Support inside $d$-dimensional ball.}
The next claim is analogous to the fact that the uniform distribution maximizes differential entropy over the class of compactly supported distributions. 
\begin{lemma}\label{lemma:max_msh_unif}
    Let $\cP_2(c,r):=\big\{ \mu\in\cP(\RR^d):\,\supp(\mu)\subseteq\BB_d(c,r) \big\}$ be the class of probability measures supported inside a $d$-dimensional ball, centered at $c\in\RR^d$ with radius $r>0$. Then    
    $$\mathsf{Unif}\big(\BB_k((c_1,\ldots,c_k),r))\big)\otimes \delta_{(c_{k+1},\ldots,c_d)}\in \argmax_{\mu\in\cP_2(c,r)}\msh_k(\mu),
    $$
    i.e., the max-sliced entropy maximizing distribution is the uniform distribution on a $k$-dimensional ball, with the remaining $n-k$ variables equal to the corresponding entries of $c$. 
    The corresponding maximal max-sliced entropy is
    \[\sup_{\mu\in\cP_2(c,r)}\msh_k(\mu)=\log\big((\pi r^2)^{k/2}/\Gamma(k/2+1)\big),
    \] where $\Gamma$ is the Gamma function.
\end{lemma}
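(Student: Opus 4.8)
The plan is to prove the identity by a matching upper-and-lower bound argument: the upper bound follows from a geometric contraction property of orthonormal projections combined with the classical maximum-entropy principle for compactly supported laws, while the lower bound follows by exhibiting an explicit optimal slice of the candidate distribution.

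First I would record the key geometric fact. For any $\rA\in\sti(k,d)$, the map $\rA^\intercal:\RR^d\to\RR^k$ is a contraction, since $\rA^\intercal\rA=\rI_k$ forces $\|\rA^\intercal\|_{\op}=\sigma_1(\rA)=1$. Hence, for every $x\in\BB_d(c,r)$ we have $\|\rA^\intercal x-\rA^\intercal c\|=\|\rA^\intercal(x-c)\|\leq\|x-c\|\leq r$, so $\rA^\intercal$ maps $\BB_d(c,r)$ into the $k$-dimensional ball $\BB_k(\rA^\intercal c,r)$. Consequently, for any $\mu\in\cP_2(c,r)$ the projected law $\mathfrak{p}^\rA_\sharp\mu$ is supported inside $\BB_k(\rA^\intercal c,r)$, independently of the particular $\rA$ chosen.

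For the upper bound I would invoke the standard maximum-entropy principle \cite{CovThom06}: among all probability measures supported on a fixed set of finite Lebesgue volume $V$, differential entropy is maximized by the uniform law, with value $\log V$. Since $\mathfrak{p}^\rA_\sharp\mu$ is supported in a translate of the radius-$r$ ball in $\RR^k$, whose (translation-invariant) volume is $V_k(r):=(\pi r^2)^{k/2}/\Gamma(k/2+1)$, we obtain $\sh(\rA^\intercal X)\leq\log V_k(r)$. Taking the supremum first over $\rA\in\sti(k,d)$ and then over $\mu\in\cP_2(c,r)$ yields $\sup_{\mu\in\cP_2(c,r)}\msh_k(\mu)\leq\log V_k(r)$. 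For the matching achievability bound I would take the candidate $\mu^\star=\mathsf{Unif}\big(\BB_k(c_{\leq k},r)\big)\otimes\delta_{c_{>k}}$, writing $c_{\leq k}=(c_1,\ldots,c_k)$ and $c_{>k}=(c_{k+1},\ldots,c_d)$, and first verify $\mu^\star\in\cP_2(c,r)$: every point of its support has the form $(y,c_{>k})$ with $\|y-c_{\leq k}\|\leq r$, whose distance to $c$ equals $\|y-c_{\leq k}\|\leq r$. Choosing the coordinate projection $\rA=[e_1\,\cdots\,e_k]\in\sti(k,d)$ gives $\rA^\intercal X=(X_1,\ldots,X_k)\sim\mathsf{Unif}\big(\BB_k(c_{\leq k},r)\big)$, whence $\sh(\rA^\intercal X)=\log V_k(r)$ and therefore $\msh_k(\mu^\star)\geq\log V_k(r)$. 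Combining the two bounds shows $\msh_k(\mu^\star)=\log V_k(r)$, so $\mu^\star$ attains the supremum and the maximal value equals $\log\big((\pi r^2)^{k/2}/\Gamma(k/2+1)\big)$.

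The argument is essentially a clean application of the maximum-entropy principle, so there is no deep obstacle; the only points requiring care are the contraction estimate, which is what pins the support of \emph{every} slice inside a radius-$r$ ball uniformly in $\rA$, and the bookkeeping of the degenerate last $d-k$ coordinates of $\mu^\star$. For a generic $\rA$ the slice $\rA^\intercal X$ of $\mu^\star$ may be singular with $\sh=-\infty$, but since $\msh_k$ is a supremum over $\rA$ and we have already exhibited one optimal slice, this causes no difficulty.
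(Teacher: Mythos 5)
Your proposal is correct and follows essentially the same route as the paper's proof: an upper bound from the maximum-entropy principle applied to the (compactly supported) projected law, matched by the explicit candidate $\mathsf{Unif}\big(\BB_k((c_1,\ldots,c_k),r)\big)\otimes\delta_{(c_{k+1},\ldots,c_d)}$ together with the coordinate projection. If anything, your write-up is slightly more careful than the paper's, since the operator-norm contraction estimate makes explicit why every slice of every $\mu\in\cP_2(c,r)$ is supported in a radius-$r$ ball, a fact the paper asserts informally.
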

\begin{proof}


    The proof shows that within the projected $k$-dimensional space the entropy maximizing distribution is the uniform distribution over the $k$-dimensional ball of radius $r$.

    First, due to the maximum entropy principle \cite[Theorem~12.1.1]{CovThom06}, when the only constraint on the distribution family is a compact support $\cX$, maximum entropy is achieved by the uniform distribution with density $p_X=\exp(-\log\mathsf{Vol}(\cX))$.
    We know that every linear $k$-dimensional projection of a $d$-dimensional ball is a $k$-dimensional ball, implying that the support set of each $k$-dimensional projection is compact. Thus, we look for a distribution $\mu\in\cP_2(c,r)$ with a $k$-dimensional projection that (i) has the largest possible volume of support in the projected space, and (ii) is uniform distribution over this projected support.
    Such a distribution, if it exists, will be the maximizer of $\msh_k(\mu)$ over $\mu\in\cP_2(c,r)$. 

    The solution to point (i)) above is simple: the largest possible projected support set is the $k$-dimensional ball of radius $r$. It remains to find a distribution with a $k$-dimensional projection that is uniform over this support set. This is achieved by 
    \[
    \mu_k:=\mathsf{Unif}\big(\BB_k((c_1,\ldots,c_k),r)\big)\otimes \delta_{(c_{k+1},\ldots,c_d)},
    \]
    noting that the projection $\rA_k:=[\rI_k;0_{k\times d}]^\intercal$, where $0_{k\times d}$ is a matrix with zero entries, yields
    $$
    \mathfrak{p}^{\rA_k}_\sharp\mu_k=\mathsf{Unif}(\BB_k((c_1,\ldots,c_k),r)).
    $$
    

    
    The maximum max-sliced entropy is thus given by the logarithm of the ball volume, i.e.,
    $$
    \max_{\mu\in\cP_2(0,r)}\msh_k(\mu)=\log\big((\pi r^2)^{k/2}/\Gamma(k/2+1)\big)
    $$

    Note that the proposed solution holds for any rotation of $\mu_k$ as follows.
    Let $\rU\in\sti(d,d)$ be orthogonal and denote $\mu_{k,\rU} = \mathfrak{p}^{\rU}_\sharp\mu_k$, which is the law of $\rU^\intercal X$, for $X\sim \mu$. The entropy maximizing distribution can be obtained with a respective rotation of $\rA_k$.
    Take $\rA_{k,\rU} = \rU\rA_k$, we have
    $$
    \rA_{k,\rU}^\intercal \rU^\intercal X = \rA_k^\intercal X,
    $$
    as desired.
    This, in turn, implies that $\argmax_{\mu\in\cP_2(c,r)}\msh_k(\mu)$ is not unique.
\end{proof}

\section{Additional Implementation Details}\label{appendix:imeplementation}

\begin{wrapfigure}{R}{0.45\textwidth}
\vspace{-0.4cm}
    \centering
    \includegraphics[trim={195pt 490pt 190pt 80pt}, clip,scale=0.65]{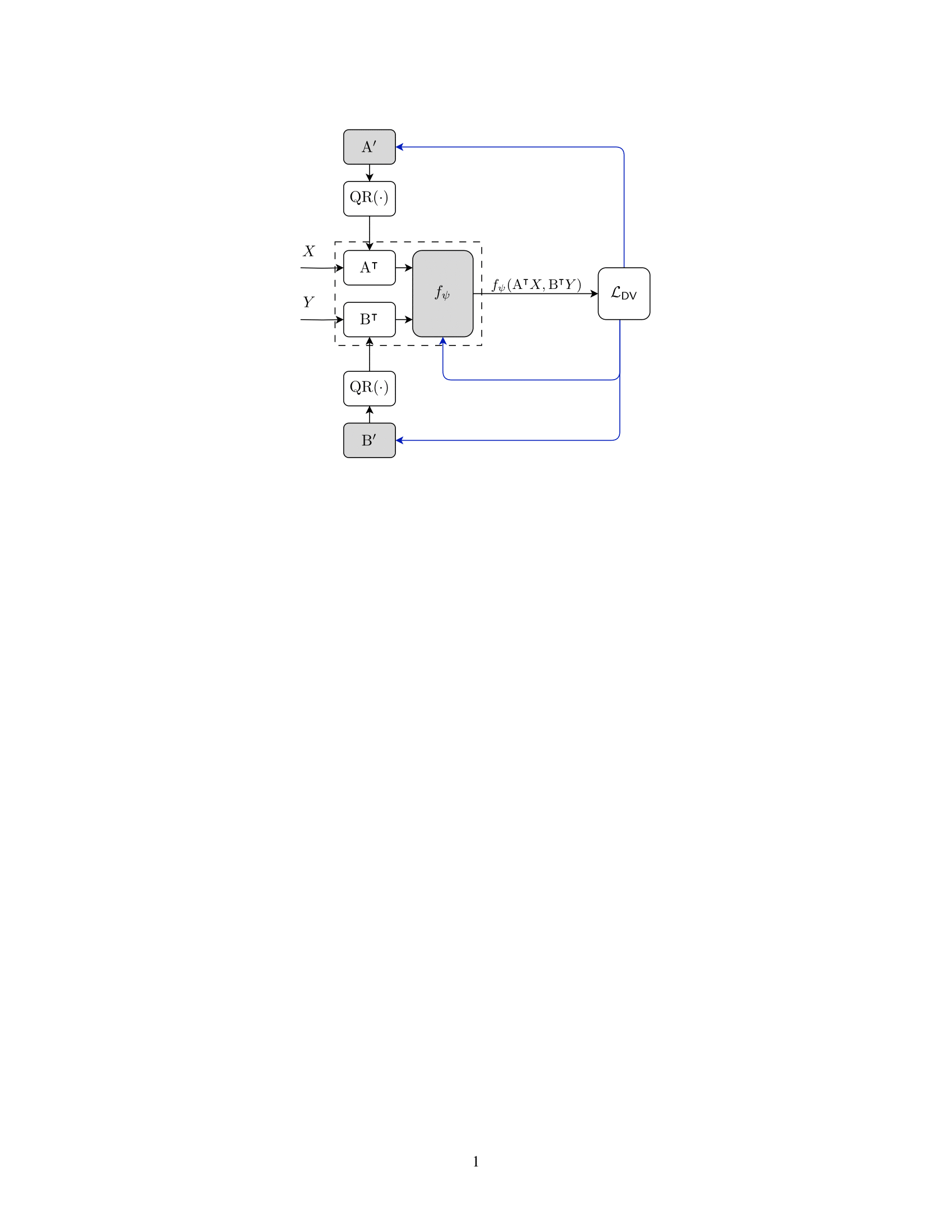}
    \caption{Neural estimation of mSMI. $\mathrm{QR}(\cdot)$ blocks denote the application of a QR decomposition, from which we take the Q (orthogonal) part. Blue lines denote gradient propagation and shaded blocks denote parametric models.}
    \label{fig:msmi_ne}
    \vspace{-0.85cm}
\end{wrapfigure}

\paragraph{Neural estimation.}
We consider the popular seperable critic \cite{song2019understanding,tschannen2019mutual,guo2022tight}, which is given by $g(x,y)=h_1(x)^\intercal h_2(y)$, such that $h_1$ and $h_2$ are two independent copies of the same MLP architecture with embedding dimension $d_o$. 
In our setting the MLP architecture is given by two hidden layers with an exponential linear unit activation \cite{clevert2015fast} whose hidden dimension of 256. The MLP output dimension is 32.
We utilize the Adam optimizer \cite{kingma2014adam} with initial learning rate of $2\times 10^{-4}$.
both the mSMI and the aSMI estimators instances are implemented with similar copies of the aforementioned critic.
The aSMI is estimated via the parallel estimator from \cite{goldfeld2022k}, following their choice of $m=1000$.

\paragraph{Algorithm.} We employ a minibatch stochastic gradient-ascent scheme. The DV potential network $f_\psi$ and slicing directions $(\rA',\rB')$ are randomly initialized. Each iteration begins by sampling a batch of positive and negative samples, which are then projected via $(\rA,\rB)$ where $(\rA, \rB)$ are the projections of $(\rA',\rB')$ onto the Stiefel manifold, as computed by QR decomposition. The projected samples are passed through $f_\psi$ and the objective of \eqref{eq:mSMI_NE} is calculated. Finally, we update the parameters $(\rA, \rB, \psi)$ are updated. The mechanism is visualised in Figure \ref{fig:msmi_ne}.

\paragraph{Independence testing.}
We follow the setting of a latent shared random variable from \cite{goldfeld2022k}, given as follows.
Let $(Z_1,Z_2)\sim\cN(0,\rI_d)$ and $V\sim\cN(0,\rI_{d'})$ be independent. We set $X=\rP_1 V + Z_1$ and $Y=\rP_2 V + Z_2$, where $\rP_1,\rP_2\in\RR^{d\times d'}$ are projection matrices with i.i.d normally distributed entries.
We estimate the aSMI via the parallel aSMI methodology from \cite{goldfeld2022k} with $m=1000$ estimator instances, and the mSMI via the LIPO algorithm  \cite{malherbe2017global} with a stopping criteria after $m=1000$ samples.
The mutual information estimator we use is the Kozachenko-Leonenko estimator \cite{kraskov2004estimating} and the AUC-ROC is computed over 100 trials.

\paragraph{Multi-view representation learning.}
The setup follows the CCA and DCCA implementations from \cite{wang2015deep} and mutual information estimation of \cite{tschannen2019mutual}.
The MLP architectures are similar to the ones used to construct the seperable critic, as described in the neural estimation implementation.
The procedure consists on dividing each image to its top and bottom halves and flattening each half.
These flattened halves are then projected using the corresponding projection models.
The classification is performed via multi-class logistic regression using SAGA \cite{defazio2014saga}, as implemented via \texttt{scikit-learn} python library.

\paragraph{Max-sliced InfoGAN.}
Following the setting of \cite{goldfeld2022k}, we replace the mutual information regularizer from the original InfoGAN implementation\footnote{Code implementation is based on \texttt{\url{https://github.com/Natsu6767/InfoGAN-PyTorch}}} with an mSMI regularizer, and maximize the compound loss over the InfoGAN parameters and the slice directions. We empirically observed that this joint optimization sometimes steers the model away from the true maximizing slice. To address this and improve the accuracy of the overall mSMI estimate, we introduced $m$ independent mSMI networks and independently initialize and optimize them. At each iteration, each sliced model yields a corresponding mSMI estimate based on its current slice. Our final mSMI estimate is then obtained by taking the maximum over the $m$ estimates, which is more likely to be close to the true mSMI. For differentiability, we approximate the maximum with a logsumexp function. Note that the analogous average-sliced InfoGAN experiment from  \cite[Section 5]{goldfeld2022k} considers the average of $m=1000$ random slicing directions, while our max-sliced InfoGAN uses only $m\leq 30$. This again demonstrates the utility of mSMI for learning tasks due to its low computational overhead.

\section{Additional Multi-View Representation Learning Results}\label{appendix:reprensetaion}
Table \ref{table:accuracy_supp} provides results on a wider range of $k$ values.
In Table \ref{table:cifar} we present a comparison of the multi-view setting for the CIFAR10 dataset \cite{krizhevsky2009learning}, which is a benchmark dataset for image classification, consisting of 60,000 small color images divided into 10 classes, similar to the MNIST dataset but with more complex and varied objects.
Because the images in the CIFAR dataset consist of three channels, straightforward flattening and projection cannot be applied.
We therefore consider a simple convolutional NN architecture that consists of two convolutional layer with padding and stride of $2$, followed by a layer normalization, average pool and a fully connected output layer to results with a $k$-dimensional output.
It is clear from Table \ref{table:cifar} that mSMI outperforms the DCCA objective in the CIFAR setting.
The results show that under no fine tuning the convolutional DCCA method doesn't scale well with the projection dimension, with optimal results for $k=30$, while the mSMI methodology continues to improve with $k$.
All results are averaged over 10 different seeds.


\begin{table}[h]
\caption{Full classification Results on MNIST}
\label{table:accuracy_supp}
\centering
{
\begin{tabular}{|c|cc|cc|}
\toprule
\textbf{$k$} & \textbf{Linear CCA} & \textbf{Linear mSMI} & \textbf{MLP DCCA} & \textbf{MLP mSMI} \\
\midrule
1 & 0.261$\pm$0.03 & \textbf{0.274}$\pm$0.02 & 0.284$\pm$0.03 & \textbf{0.291}$\pm$0.02 \\
2 & 0.32$\pm$0.02 & \textbf{0.346}$\pm$0.02 & 0.314$\pm$0.03 & \textbf{0.417}$\pm$0.02 \\
4 & 0.42$\pm$0.01 & \textbf{0.478}$\pm$0.02 & 0.441$\pm$0.04 & \textbf{0.546}$\pm$0.01 \\
6 & 0.502$\pm$0.01 & \textbf{0.634}$\pm$0.01 & 0.599$\pm$0.01 & \textbf{0.655}$\pm$0.01 \\
8 & 0.553$\pm$0.03 & \textbf{0.666}$\pm$0.01 & 0.645$\pm$0.02 & \textbf{0.665}$\pm$0.01 \\
10 & 0.595$\pm$0.01 & \textbf{0.702}$\pm$0.01 & 0.668$\pm$0.01 & \textbf{0.715}$\pm$0.01 \\
12 & 0.614$\pm$0.02 & \textbf{0.751}$\pm$0.01 & 0.697$\pm$0.01 & \textbf{0.753}$\pm$0.01 \\
14 & 0.65$\pm$0.01 & \textbf{0.767}$\pm$0.01 & 0.71$\pm$0.01 & \textbf{0.767}$\pm$0.01\\
16 & 0.673$\pm$0.02 & \textbf{0.775}$\pm$0.01 & 0.730$\pm$0.02 & \textbf{0.779}$\pm$0.01 \\
18 & 0.689$\pm$0.01 & \textbf{0.785}$\pm$0.006 & 0.762$\pm$0.009 & \textbf{0.779}$\pm$0.01 \\
20 & 0.704$\pm$0.007 & \textbf{0.79}$\pm$0.006 & 0.774$\pm$0.01 & \textbf{0.798}$\pm$0.01 \\
\bottomrule
\end{tabular}
}
\end{table}

\begin{table}[ht]
\caption{Result in the CIFAR10 dataset.}
\centering
\label{table:cifar}
\begin{tabular}{|c|cc|}
\toprule
\textbf{$k$} & \textbf{DCCA} & \textbf{mSMI}  \\
\midrule
1 & 0.1281 $\pm$ 0.0387 & \textbf{0.1374} $\pm$ 0.0310 \\
5 & 0.1324 $\pm$ 0.0397 &\textbf{ 0.1714 }$\pm$ 0.0110 \\
10 & 0.1802 $\pm$ 0.0050 & \textbf{0.2040} $\pm$ 0.0070 \\
20 & 0.2262 $\pm$ 0.0142 & \textbf{0.2471} $\pm$ 0.0090 \\
30 & 0.2433 $\pm$ 0.0196 & \textbf{0.2487} $\pm$ 0.0105 \\
40 & 0.1999 $\pm$ 0.0397 & \textbf{0.2508} $\pm$ 0.0254 \\
50 & 0.1627 $\pm$ 0.0183 & \textbf{0.2555} $\pm$ 0.0168 \\
60 & 0.1840 $\pm$ 0.0196 & \textbf{0.2792} $\pm$ 0.0115 \\
70 & 0.1973 $\pm$ 0.0085 & \textbf{0.2876} $\pm$ 0.0058 \\
\bottomrule
\end{tabular}
\end{table}

\section{Additional Fairness Representation Learning Results}
Table \ref{table:Adult} provides fairness representation learning results on the UCI Adult dataset. This dataset consists of 48,842 rows of US Census data, with 14 features describing educational background, age, race, marital status, and others. Here, the outcome $Y$ is a binary indicator of whether the individual has an income at least US\$50,000, and the sensitive attribute $T$ is race.
\label{app:adult}
\begin{table}[ht]
 \caption{Learning a fair representation of the Adult dataset, following the setup of \cite{chen2023scalable}.  }
\label{table:Adult}
\centering
\resizebox{.9\linewidth}{!}{
\begin{tabular}{|c|c|c|c|c|c|c|c|c|c|}
\hline
 & \textbf{N/A} & \textbf{Slice \cite{chen2023scalable}} & \multicolumn{7}{c|}{\textbf{mSMI (ours)}}  \\
 \hline
 \cellcolor{gray}& \cellcolor{gray}& \cellcolor{gray} &$k=1$ & $k=2$ & $k=3$ & $k=4$ & $k=5$ & ${k=6}$ & $k=7$\\
\hline
$\rho_{\mathsf{HGR}}^\ast(Z,Y) \uparrow$ & 0.998 & 0.979 & 0.998 & 0.972 & 0.947 & \textbf{0.992} & 0.971 & 0.991 & 0.962 \\
\hline
$\rho_{\mathsf{HGR}}^\ast(Z,A) \downarrow$ & 0.990 & 0.068 &0.43 & 0.393 & 0.137 & \textbf{0.052} & 0.053 &0.137 & 0.74 \\
\hline
\end{tabular}}
\end{table}



\end{document}